\DeclareMathOperator*{\argmax}{arg\,max}
\newtheorem{theorem}             {Theorem}
\newcommand{\mathsym}[1]{{}}
\newcommand{\unicode}[1]{{}}
\newcommand*{\muGA}{($\mu$+1)~GA\xspace}
\newcommand*{\GAtwo}{(2+1)~GA\xspace}
\newcommand*{\suga}{$(2+1)_{S}$~GA\xspace}
\newcommand{\om}{\text{\sc OneMax}\xspace}
\newcommand*{\ea}{(1+1)~EA\xspace}
\newcommand{\onemax}{\text{\sc OneMax}\xspace}
\newcommand{\jump}{\text{\sc Jump}\xspace}
\newcommand{\bo}{\ensuremath{{\mathcal{O}}}}
\begin{document}

\title{Standard Steady State Genetic Algorithms Can Hillclimb Faster than 
Mutation-only Evolutionary Algorithms}

%
%



\author{Dogan Corus\\
\small Department of Computer Science,\\
\small University of Sheffield,\\
\small  Sheffield, UK
\and Pietro S. Oliveto\\
\small Department of Computer Science,\\
\small University of Sheffield,\\
\small  Sheffield, UK
}

\maketitle

\begin{abstract}
Explaining to what extent the real power of genetic algorithms lies in the ability 
of crossover to recombine individuals into higher quality solutions is an 
important problem in evolutionary computation.
In this paper we show how the interplay between mutation and crossover can  make genetic algorithms hillclimb faster than their 
mutation-only 
counterparts. 
We devise a Markov Chain 
framework that allows  
to rigorously prove an upper bound on the runtime of standard steady state genetic algorithms
to hillclimb the \onemax 
function. The bound establishes that the steady-state genetic algorithms are 25$\%$
faster than all standard bit mutation-only evolutionary algorithms with static mutation rate up to lower order terms 
for moderate population sizes. 
The analysis also suggests that larger populations may be faster than 
populations of size 2.
We present a lower bound
for a greedy (2+1)~GA  
that matches the upper bound for populations larger than 2, 
rigorously proving that 2 individuals cannot outperform larger population sizes
under greedy selection and greedy crossover up to lower order terms. In complementary experiments the 
best population size is greater than 2 and the greedy genetic algorithms are faster than 
standard ones, further suggesting that the derived lower bound also holds for the 
standard steady state (2+1)~GA.

\end{abstract}

\section{Introduction}

\label{sec:int}

\makeatletter{}
Genetic algorithms 
(GAs) rely on a  population of individuals that simultaneously explore the 
search space. The main distinguishing features of GAs from other randomised search heuristics is their use of a population and crossover to generate new solutions.
Rather than slightly modifying the current best solution as in more traditional heuristics,
the idea behind GAs is that new solutions are 
generated by recombining individuals of the current population (i.e., 
crossover). Such individuals are selected to reproduce probabilistically 
according to their fitness (i.e., reproduction).
Occasionally, random mutations may slightly modify the offspring produced by crossover.
The original motivation behind these mutations is to avoid that some genetic material may be lost forever,
thus allowing to avoid premature convergence~\cite{GOLDBERG,EIBENSMITH}.
%
For these reasons the GA community traditionally regards crossover as the main search operator while mutation is considered a {\it ``background operator''} \cite{EIBENSMITH, BACK, HOLLAND}
or a {\it ``secondary mechanism of genetic adaptation''} \cite{GOLDBERG}.  

Explaining when and why GAs are effective has 
proved to be a non-trivial task. Schema theory and its resulting building block 
hypothesis \cite{GOLDBERG} were devised to explain such working principles. 
However, these theories did not allow to rigorously characterise the behaviour 
and performance of GAs. The hypothesis was disputed when a class of functions 
(i.e., Royal Road), thought to be ideal for GAs, was designed and experiments 
revealed that the simple \ea was more efficient \cite{MitchellForrestFOGA1993, 
Jansen2005}. 

Runtime analysis approaches have provided rigorous proofs that crossover may 
indeed speed up the evolutionary process of GAs in ideal conditions (i.e., if 
sufficient diversity is available in the population). 
The \jump function was introduced by Jansen and Wegener as a first example where 
crossover considerably improves the expected runtime compared to mutation-only 
Evolutionary Algorithms (EAs) \cite{Jansen2002}. The proof required an 
unrealistically small crossover probability to allow mutation alone to create 
the necessary population diversity for the crossover operator to then escape the 
local optimum. Dang \emph{et al.} recently showed that the sufficient diversity, 
and even faster upper bounds on the runtime for not too large jump gaps, 
can be achieved also for realistic crossover probabilities by using diversity 
mechanisms \cite{Dang2016a}. Further examples that show the effectiveness of 
crossover have been given for both artificially constructed functions and 
standard combinatorial optimisation problems (see the next section for an 
overview).  

Excellent hillclimbing  performance of crossover based GAs has been 
also proved. B. Doerr et al. proposed a (1+($\lambda$,$\lambda$))~GA which 
optimises the \onemax function in \linebreak $\Theta(n \sqrt{\log \log \log 
(n)/\log\log(n)} )$ fitness evaluations (i.e., runtime) 
\cite{Doerr2015}, \cite{doerr_black-box_2015}. Since the unbiased unary black 
box complexity of 
\onemax is $\Omega(n \log n)$ \cite{LehreWitt2012}, the algorithm is 
asymptotically faster than any unbiased mutation-only evolutionary algorithm 
(EA). 
Furthermore, the algorithm runs in linear time when the population size is self-adapted throughout the run \cite{DoerrAdaptiveLambda}.
Through this work, though, it is hard to derive conclusions on the working 
principles of standard GAs because these are very different compared to the 
(1+($\lambda$,$\lambda$))~GA in several aspects. In particular, the (1+($\lambda$,$\lambda$))~GA 
was especially designed to use crossover as a repair mechanism that follows the creation of new solutions via high mutation rates.
This makes the algorithm work in a considerably different way compared to traditional GAs.

More traditional GAs have been analysed by Sudholt \cite{Sudholt2015}. 
Concerning \onemax, he shows how ($\mu$+$\lambda$)~GAs are twice as fast as 
their standard bit mutation-only counterparts. As a consequence, he showed an 
upper bound of 
$(e/2) n \log n (1 + o(1))$ function evaluations for a (2+1)~GA versus the $e n 
\log n (1 -o(1))$ function evaluations required by any standard bit mutation-only EA 
\cite{SudholtTEVC2013,Witt2013}. 
This bound further reduces to $1.19 n \ln n \pm O(n \log \log n)$ if the optimal mutation rate is used (i.e., $(1+\sqrt{5})/2 \cdot 1/n \approx 1.618/n$). 
However, the analysis requires that diversity is 
artificially enforced in the population by breaking ties always preferring 
genotypically different individuals. This mechanism ensures that once diversity 
is created on a given fitness level, it will never be lost unless a better 
fitness level is reached, giving ample opportunities for crossover to exploit 
this diversity.

Recently, it has been shown that it is not necessary to enforce diversity for 
standard steady state GAs to outperform standard bit mutation-only EAs 
\cite{Dang2016b}. In 
particular, the \jump function was used as an example to show how the interplay 
between crossover and mutation may be sufficient for the emergence of the 
necessary diversity to escape from local optima more quickly. 
Essentially, a runtime of $O(n^{k-1})$ may be achieved for any sublinear jump length $k>2$
versus the $\Theta(n^k)$ function evaluations required by standard bit  mutation-only EAs.

In this paper, we show that this interplay between mutation and crossover may 
also speed-up the hillclimbing capabilities of steady state GAs without the need 
of enforcing diversity artificially. In particular, we consider a standard 
steady state \muGA \cite{SarmaDeJongHANDBOOK,EIBENSMITH,ROWE} and prove an upper 
bound on the runtime  to hillclimb the \onemax function of $(3/4) e n \log n + 
O(n)$ for any $ \mu \geq 3$  and $\mu= o(\log{n}/\log{\log{n}})$ when the 
standard $1/n$ mutation rate is 
used. 
Apart from showing that standard ($\mu$+1)~GAs are faster than their standard 
bit
mutation-only counterparts up to population sizes 
$\mu=o(\log{n}/\log{\log{n}})$, the framework provides two other interesting 
insights. Firstly, it delivers better runtime bounds for mutation rates that are 
higher than the standard $1/n$ rate. The best upper bound of $0.72 e n \log n + 
O(n)$ is achieved for $c/n$ with $c=\frac{1}{2} \left(\sqrt{13}-1\right) \approx 
1.3$. Secondly, the framework provides a larger upper bound, up to lower order terms, for the (2+1)~GA 
compared to that of any $\mu \geq 3$ and $\mu=o(\log{n}/\log{\log{n}})$. The 
reason for the larger constant in the leading term of the runtime 
is that, for populations of size 2, there is always a constant probability that 
any selected individual takes over the population in the next generation. This 
is not the case for population sizes larger than 2.

To shed light on the exact runtime for population size $\mu=2$ we present a 
lower bound analysis for a greedy genetic algorithm, which we call (2+1)$_S$~GA, that always selects individuals of 
highest fitness for crossover and always successfully recombines them if their Hamming distance is greater than 2. 
This algorithm is similar to the one analysed 
by Sudholt \cite{Sudholt2015} to allow the derivation of a lower bound, with the 
exception that we do not enforce any diversity artificially and that our crossover operator is slightly less greedy 
(i.e., in \cite{Sudholt2015} crossover always recombines correctly individuals also when the Hamming distance is exactly 2). Our analysis 
delivers a matching lower bound for all mutation rates $c/n$, where $c$ is a 
constant, for the 
greedy (2+1)$_S$~GA (thus also $(3/4) e n \log n + O(n)$ and $0.72 e n \log n + 
O(n)$ respectively for mutation rates $1/n$ and $1.3/n$). This result 
rigorously 
proves that, under greedy selection and semi-greedy crossover, the (2+1)~GA cannot outperform any \muGA 
with $\mu \geq 3$ and $\mu = o(\log{n}/\log{\log{n}})$. 

We present some experimental investigations to shed light on the questions that 
emerge from the theoretical work. 
In the experiments we consider the commonly used parent selection that chooses uniformly at random from the population 
with replacement (i.e., our theoretical upper bounds hold for a larger variety of parent selection operators).
  We first compare the performance of the 
standard steady state GAs against the fastest standard bit mutation-only EA with fixed mutation rate (i.e., the \ea \cite{SudholtTEVC2013,Witt2013})
and the GAs that have been proved to outperform it. The experiments show that 
the speedups over the \ea occur already for small problem sizes $n$ and that 
population sizes larger than $2$ are faster than the standard (2+1)~GA. 
Furthermore, the greedy \suga indeed appears to be faster than the standard 
(2+1)~GA\footnote{We thank an anonymous reviewer for pointing out that this is not obvious.}, further suggesting that the theoretical lower bound also holds for the 
latter algorithm. Finally, experiments confirm that larger mutation rates than 
$1/n$ are more efficient. In particular, better runtimes are achieved for 
mutation rates that are even larger than the ones that minimise our theoretical 
upper bound (i.e., $c/n$ with 1.5 $ \leq c \leq$ 1.6 versus the $c=$1.3 we have 
derived mathematically; interestingly this experimental rate is similar to the 
optimal mutation rate for OneMax of the algorithm analysed in \cite{Sudholt2015}). These 
theoretical and experimental results seem to be 
in line with those recently presented for the same steady state GAs for the 
\jump function \cite{Dang2016b, Dang2016a}: higher mutation rates than $1/n$ are 
also more effective on \jump. 

The rest of the paper is structured as follows.
%
%
%
In the next section we briefly review previous related works that consider 
algorithms using crossover operators. In Section \ref{sec:preliminaries} we give 
precise definitions of the steady state \muGA and of the  \onemax function. In 
Section \ref{sec:framework} we present the Markov Chain framework that we will use for the 
analysis of steady state elitist GAs. In Section \ref{sec:upperbound} we apply 
the framework to analyse the \muGA and present the upper bound on the runtime 
for any   $3 \leq \mu=o(\log{n}/\log{\log{n}})$ and mutation 
rate $c/n$ for any constant $c$. In Section 
\ref{sec:lowerbound} we present the matching lower bound on the runtime of the 
greedy (2+1)$_S$~GA. In Section \ref{sec:experiments} we present our 
experimental findings. In 
the Conclusion we present a discussion and open questions for future work.

\section{Related Work}
The first rigorous groundbreaking proof that crossover can considerably improve 
the performance of EAs was given by Jansen and Wegener 
 for the \muGA with an unrealistically low crossover probability \cite{Jansen2002}. 
A series of following works on the analysis of the \jump function have made the 
algorithm characteristics increasingly realistic 
\cite{Dang2016a,kotzing_how_2011}.
Today it has been rigorously proved that the standard steady state \muGA with 
realistic parameter settings does not require artificial
diversity enforcement to outperform its standard bit mutation-only counterpart 
to escape the 
plateau of local optima of the \jump function \cite{Dang2016b}.

Proofs that crossover may make a difference between polynomial and exponential 
time for escaping local optima have also been available for some time 
\cite{Storch2004,Jansen2005}. The authors devised example functions 
where, if sufficient diversity was enforced by some mechanism, then crossover 
could efficiently combine different individuals into an optimal solution. 
Mutation, on the other hand required a long time because of the great Hamming 
distance between the local and global optima. The authors chose to call the 
artificially designed functions {\it Real Royal Road} functions because the 
Royal Road functions devised to support the building block hypothesis had failed 
to do so \cite{MitchellHollandForrest94}. The Real Royal Road functions, though, 
had no resemblance with the schemata structures required by the building block 
hypothesis.

The utility of crossover has also been proved for less artificial problems such 
as coloring problems inspired by the Ising model from physics 
\cite{Sudholt2005}, computing input-output sequences in finite state machines 
\cite{Lehre2011},
shortest path problems \cite{Doerr2012},  vertex cover \cite{Neumann2011} and
multi-objective optimization problems \cite{Qian2013}.
%
The above works show that crossover 
allows to escape from local optima that have large basins of attraction for the 
mutation operator. Hence, they establish the usefulness of crossover as an 
operator to enchance the exploration capabilities of the algorithm. 

The interplay between crossover and mutation 
may produce a speed-up also in the exploitation phase, for instance when the 
algorithm is hillclimbing.
Research in this direction has recently appeared. The  design of the 
(1+($\lambda,\lambda$))~GA was theoretically driven to beat the $\Omega(n \ln 
n)$ lower bound of all unary unbiased black box algorithms. Since the dynamics of the 
algorithm differ considerably from those of standard GAs, it is difficult to 
achieve more general conclusions about the performance of GAs from the analysis 
of the (1+($\lambda,\lambda$))~GA. From this point of view the work of Sudholt 
is more revealing when he shows that any standard ($\mu+\lambda$)~GA outperforms 
its standard bit mutation-only counterpart for hillclimbing the \onemax 
function 
\cite{Sudholt2015}. The only caveat is that the selection stage enforces 
diversity artificially, similarly to how Jansen and Wegener had enforced 
diversity for the Real Royal Road function analysis. In this paper we rigorously 
prove that it is not necessary to enforce diversity artificially for 
standard-steady state GAs to outperform their standard bit mutation-only 
counterpart. 
\section{Preliminaries}
\label{sec:preliminaries}
\makeatletter{}
\begin{algorithm2e}[t] \label{alg:mu+1-GA}
    \caption{\muGA \cite{SarmaDeJongHANDBOOK,EIBENSMITH,ROWE,Dang2016b}
    }

    $P \gets \mu \textrm{ individuals, uniformly at random from } \{0, 1\}^n$\;
    \Repeat{\textrm{termination condition satisfied}}
    {
            Select $x, y \in P$ with replacement using an operator abiding (\ref{select})\;
            $z \gets$ Uniform crossover with probability $1/2$  $(x, y)$\;
            Flip each bit in $z$ with probability $c/n$\;  
             $P \gets P \cup \{z\}$\;
        Choose one element from $P$ with lowest fitness and remove it from $P$, 
breaking ties at random; 
    }
  \end{algorithm2e}

We will analyse the runtime (i.e., the expected number of fitness function 
evaluations before an optimal search point is found) of 
a steady state genetic algorithm with population size $\mu$ and offspring size 1 
(Algorithm~\ref{alg:mu+1-GA}). In steady state GAs the entire population is not 
changed at once, but rather a part of it. In this paper we consider the most 
common option of creating one new solution per generation 
\cite{SarmaDeJongHANDBOOK,ROWE}. Rather than restricting the algorithm to the most 
commonly used uniform selection of two parents, we allow more flexibility to the 
choice of which parent selection mechanism is used. This approach was also 
followed by Sudholt for the analysis of the \muGA with diversity 
\cite{Sudholt2015}.  In each generation the algorithm picks two parents from its 
population with replacement using a selection operator that satisfies the following condition.
\begin{equation}
\label{select}
\forall x,y: \; f(x)\geq f(y) \implies \Pr(\text{select }x) \geq 
\Pr(\text{select }y).
\end{equation}

The condition allows to use most of the popular parent selection mechanisms with replacement
such as fitness proportional selection, rank selection or the one commonly used 
in steady state GAs, i.e., uniform selection \cite{EIBENSMITH}.
Afterwards, uniform crossover between the selected parents (i.e., each bit of 
the offspring is chosen from each parent with probability $1/2$) provides  an 
offspring to which standard bit mutation (i.e., each bit is flipped with with 
probability $c/n$) is applied. The best $\mu$ among the $\mu+1$ solutions are 
carried over to the next generation and ties are broken uniformly at random.

In the paper we use the standard convention for naming steady state algorithms: the ($\mu$+1)~EA differs from the \muGA 
by only selecting one individual per generation for reproduction and applying standard bit mutation to it (i.e., no crossover).
Otherwise the two algorithms are identical.

We will analyse Algorithm~\ref{alg:mu+1-GA} for the well-studied $\om$
function that is defined on bitstrings $x\in \{0,1\}^n$ of 
length $n$ and returns the number of $1$-bits in the string: 
$ \om(x)=\sum_{i=1}^{n}x_i$.
Here $x_i$ is the $i$th bit of the solution $x\in \{0,1\}^n$. 
The \onemax benchmark function is very useful to assess the hillclimbing 
capabilities of a search heuristic. 
It displays the characteristic function optimisation property that finding improving solutions
becomes harder as the algorithm approaches the optimum.
The problem is the same as that of identifying the hidden solution of the Mastermind game
where we assume for simplicity that the target string is the one of all 1-bits.
Any other target string $z\in \{0,1\}^n$ may also be used without loss of generality.
If a bitstring is used, then $\om$ is equivalent to Mastermind with two colours 
\cite{Doerr2014}.
This can be generalised to many colours if alphabets of greater size are used 
\cite{DoerrDoerrSpohelHenning2016, Doerr2016}.

\section{Markov Chain Framework}
\label{sec:framework}
\begin{figure}\caption{Markov Chain for fitness level $i$.} 
\label{fig-mchain}
\begin{center}
\begin{tikzpicture}[->, >=stealth', auto, semithick, node distance=3cm]
\tikzstyle{every state}=[fill=white,draw=black,thick,text=black,scale=1]
 \node[state]    (B){$S_{1,i}$};
\node[state]    (C)[right of=B]   {$S_{2,i}$};
 \node[state, double]    (D)[right of=C]   {$S_{3,i}$};
 \path
 (B) edge[loop below]     node{$1-p_m-p_d$}         (B);
 \path
 (B) edge[bend left, below]     node{$p_d$}         (C);
  \path
 (C) edge[loop below]     node{$1-p_c-p_r$}         (C);
 \path
 (C) edge[bend left, below]     node{$p_r$}         (B);
 \path
 (C) edge[right, below]     node{$p_c$}         (D);
 \path
 (B) edge[bend left, above]     node{$p_m$}         (D);
 \end{tikzpicture}
\end{center}
 \end{figure}
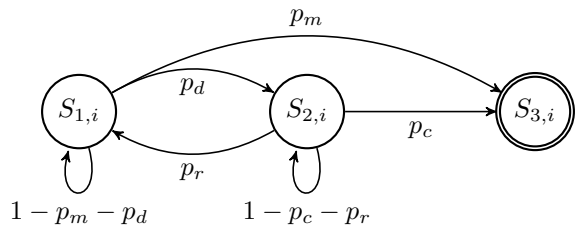
The recent analysis of the \muGA for the \jump function shows that the interplay between crossover and mutation may
create the diversity required for crossover to decrease the expected time to jump towards the optimum \cite{Dang2016b}. 
At the heart of the proof is the analysis of a random walk on the number of diverse individuals on the local optima of the function.
The analysis delivers improved asymptotic expected runtimes of the  \muGA over mutation-only EAs only for population sizes $\mu=\omega(1)$.
This happens because, for larger population sizes, it takes more time to lose 
diversity once created, hence crossover has more time to exploit it.
For \onemax the technique delivers worse asymptotic bounds for population sizes $\mu=\omega(1)$ and an $O(n \ln n)$ bound for constant population size.
Hence, the techniques of \cite{Dang2016b} cannot be directly applied to show a speed-up of the \muGA over mutation-only EAs and a careful analysis of the leading constant in the runtime is necessary.
In this section we present the Markov chain framework that we will use to obtain 
the upper bounds on the runtime of the elitist steady state GAs. 
We will afterwards discuss how this approach builds upon and generalises Sudholt's approach in \cite{Sudholt2015}.

The $\om$ function has $n+1$ distinct fitness values. We divide the search 
space 
into the following canonical fitness levels \cite{JANSENBOOK, OlivetoYao2010}: 
\[L_i=\{x\in \{0,1\}^n|\om(x)=i\}.\] 
We say that 
a population is in fitness level $i$ if and only if its best solution is in 
level $L_i$.

We use a Markov chain (MC) for each fitness level $i$ to represent the different 
states the population may be in before reaching the next fitness level. The MC 
depicted in Fig.~\ref{fig-mchain} distinguishes between states where the 
population has no diversity (i.e., all individuals have the same genotype), 
hence crossover is ineffective, and states where diversity is available to be 
exploited by the crossover operator. The MC has one absorbing state  and two 
transient states. The first transient state $S_{1,i}$ is adopted if the whole 
population consists  of  copies of the same individual at level $i$ (i.e., all the individuals have the same genotype). The second 
state $S_{2,i}$ is reached if the population consists of $\mu$ individuals  in 
fitness level $i$ and at least two individuals $x$  and $y$ are not identical. The second transient state $S_{2,i}$ differs from the 
state $S_{1,i}$ in having diversity which can be exploited by the crossover 
operator.  $S_{1,i}$ and $S_{2,i}$ are mutually accessible from each other since 
the diversity can be introduced at state $S_{1,i}$ via mutation with some 
probability $p_d$ and can be lost at state $S_{2,i}$ with some relapse probability $p_r$ when 
copies  of  a solution take over the  population. 

 The absorbing state $S_{3,i}$ is reached when a solution 
at a better fitness level is found, an event that happens with probability 
$p_m$ 
when the population is at state $S_{1,i}$ and with probability $p_c$ when the 
population is at state $S_{2,i}$. 
We pessimistically assume that in $S_{2,i}$ 
there is always only one single individual with a 
different genotype (i.e., with more than one distinct individual, $p_c$ would be 
higher and $p_r$ would be zero). 
Formally when $S_{3,i}$ is reached the population is no longer in level $i$ because a better fitness level has
been found. However, we will bound  the expected time to reach the absorbing state for the next level only when the whole population
has reached it (or a higher level). We do this because we assume that initially all the population is in level $i$ when calculating the transition probabilities in the MC for each level $i$. This implies that 
bounding the expected times to reach the absorbing states of each fitness level 
is not sufficient to achieve an upper bound on the total expected runtime. When 
$S_{3,i}$ is reached for the first time, the population only has one individual 
at the next fitness level or in a higher one.  Only when all the individuals have reached level 
$i+1$ (i.e., either in state $S_{1,i+1}$ or $S_{ 2,i+1}$) may we use the MC to 
bound the runtime to overcome level $i+1$.  Then the MC can be applied, 
once per fitness level, to bound the total runtime  until the optimum is 
reached.


The main distinguishing aspect between the analysis presented herein and that 
of 
Sudholt \cite{Sudholt2015}
is that we take into account the possibility to transition back and forth  
(i.e., resp. with probability $p_d$ and $p_r$) between states $S_{1,i}$ and 
$S_{2,i}$ as in standard steady state GAs (see Fig. \ref{fig-mchain}).
By enforcing that different genotypes on the same fitness level are kept in the population, 
the genetic algorithm considered in \cite{Sudholt2015} has a good probability of exploiting this diversity to recombine the different individuals.
In particular, once the diversity is created it will never be lost, giving many opportunities for crossover to take advantage of it.
A crucial aspect is that the probability of increasing the number of ones via crossover is much higher 
than the probability of doing so via mutation once many 1-bits have been collected.
Hence, by enforcing that once State $S_{2,i}$ is reached it cannot be left until a higher fitness level is found, Sudholt could prove that the resulting algorithm 
is faster compared to only using standard bit mutation. 
In the standard steady state GA, instead, once the diversity is created it may subsequently be lost before crossover successfully recombines the diverse individuals.
This behaviour is modelled in the MC by considering the relapse probability $p_r$. 
Hence, the algorithm spends less time in state $S_{2,i}$ compared to the GA with diversity enforcement. 
Nevertheless, it will still spend some optimisation time in state $S_{2,i}$ where it will have a higher probability of improving its fitness by exploiting the diversity via crossover
than when in state $S_{1,i}$ (i.e., no diversity) where it has to rely on mutation only. For this reason the algorithm will not be as fast for \om as the GA with enforced diversity 
but will still be faster than standard bit mutation-only EAs. 

An interesting consequence of the possibility of losing diversity, is that 
populations of size greater than 2 can be beneficial.
In particular the diversity (i.e., State $S_{2,i}$) may be completely lost in the next step when there is only one diverse individual left in the population.
When this is the case, the relapse probability $p_r$ decreases with the population size $\mu$ because the probability of selecting the diverse individual for removal is $1/\mu$.
Furthermore, for population size $\mu=2$ there is a positive probability that diversity is lost in every generation by either of the two individuals taking over, while for larger population sizes this is not the case. 
As a result our MC framework analysis will deliver a better upper bound for $\mu>2$ compared to the bound for $\mu=2$.
This interesting insight into the utility of larger populations could not be seen in the analysis of \cite{Sudholt2015} because there, once the 
diversity is achieved, it cannot be lost.

We first concentrate on the expected absorbing time of the MC. 
Afterwards we will calculate the takeover time before we can transition from one MC to the next.
Since it is not easy to derive the exact transition probabilities, a runtime 
analysis is considerably simplified by using bounds on these
probabilities.
The main result of this section is stated in the following theorem that shows that we can  use lower bounds on 
the transition probabilities moving in the direction of the absorbing state 
(i.e., $p_m$, $p_d$ and $p_c$) and an upper bound on the probability of moving 
in 
the opposite direction to no diversity (i.e., $p_r$) to derive an upper bound 
on the expected absorbing time of the Markov chain. In particular, we define a 
Markov chain $M'$ that uses the bounds on the exact transition probabilities and 
show that its expected absorbing time is greater than the absorbing time of the 
original chain.
Hereafter, we drop the level index $i$ for brevity and use $E[T_1]$ and 
$E[T_2]$ instead of $E[T_{1,i}]$ and $E[T_{2,i}]$ (Similarly, $S_{1}$ will 
denote state $S_{1,i}$).

\begin{theorem}
\label{lem:mcdom}
Consider two  Markov chains $M$ and $M'$ with the topology in 
Figure~\ref{fig-mchain} where the transition probabilities for $M$ are $p_c$, 
$p_m$, $p_d$ , $p_r$ and the transition probabilities for $M'$ are $p_c'$, 
$p_m'$, $p_d'$ and $p_r'$. Let the expected absorbing time for $M$ be $E[T]$
and the expected absorbing time of $M'$ starting from state $S_{1}$
be $E[T_{1}']$ respectively. If 
\begin{itemize}
 \item $p_m<p_c$
 \item $p_d' \leq p_d$
 \item $p_r' \geq p_r$
 \item $p_c' \leq p_c$
 \item $p_m' \leq p_m$
\end{itemize}
Then $E[T]\leq E[T_{1}'] \leq 
\frac{p_c'+p_r'}{p_c' p_d' + p_c' p_m' + p_m' p_r'}+\frac{1}{p_c'}$.
\end{theorem}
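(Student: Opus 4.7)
The plan is to obtain a closed form for $E[T_1']$ by standard first-step analysis, derive the stated upper bound as an algebraic relaxation of that closed form, and then compare $E[T]$ against $E[T_1']$ via monotonicity of the same formula in each of the four transition probabilities. Writing the first-step equations $(p_m'+p_d')E[T_1']=1+p_d'E[T_2']$ and $(p_c'+p_r')E[T_2']=1+p_r'E[T_1']$ and eliminating $E[T_2']$ yields
$$E[T_1']=\frac{p_c'+p_r'+p_d'}{p_c'p_d'+p_c'p_m'+p_m'p_r'}.$$
The claimed upper bound follows at once by splitting the numerator as $(p_c'+p_r')+p_d'$ and using the crude estimate $p_d'/(p_c'p_d'+p_c'p_m'+p_m'p_r')\leq p_d'/(p_c'p_d')=1/p_c'$.

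For the substantive inequality $E[T]\leq E[T_1']$ I would first reduce to showing $E[T_1]\leq E[T_1']$, where $E[T_1]$ is the expected absorption time of $M$ from $S_1$. From the analogous system for $M$ one gets $E[T_1]-E[T_2]=(p_cE[T_1]-1)/(p_c+p_r)$, and substituting the closed form for $E[T_1]$ simplifies this to $(p_c+p_r)(p_c-p_m)/[p_c(p_cp_d+p_cp_m+p_mp_r)]$, which is non-negative under the hypothesis $p_m<p_c$. Hence $E[T_2]\leq E[T_1]$ and it suffices to bound the worst-case starting state $S_1$.

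The remaining, and main, task is to compare $E[T_1]$ with $E[T_1']$ through the shared function $f(p_c,p_m,p_d,p_r)=(p_c+p_r+p_d)/(p_cp_d+p_cp_m+p_mp_r)$. A direct computation of partial derivatives shows that $\partial f/\partial p_c$ and $\partial f/\partial p_m$ are non-positive unconditionally, whereas $\partial f/\partial p_r$ carries the sign of $p_c-p_m$ and $\partial f/\partial p_d$ the sign of $p_m-p_c$. The main obstacle is that after substituting $p_c'$ for $p_c$ the inequality $p_c\geq p_m$ may fail, so the conditional monotonicities in $p_r$ and $p_d$ cannot be invoked in arbitrary order. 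I would sidestep this by modifying the parameters in the specific order $p_r\mapsto p_r'$, $p_d\mapsto p_d'$, $p_c\mapsto p_c'$, $p_m\mapsto p_m'$: the first two substitutions use conditional monotonicity while $p_c$ and $p_m$ still equal their original values (so the hypothesis $p_c>p_m$ is intact), and the last two rely on the unconditional monotonicity of $f$ in $p_c$ and $p_m$ respectively. Since each of the four steps weakly increases $f$, chaining the inequalities yields $E[T_1]=f(p_c,p_m,p_d,p_r)\leq f(p_c',p_m',p_d',p_r')=E[T_1']$, which together with the closed-form bound completes the proof.
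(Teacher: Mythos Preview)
Your proof is correct and essentially mirrors the paper's: they too derive the closed form by first-step analysis, show $E[T_1]\geq E[T_2]$ under $p_c>p_m$, and then substitute $(p_r,p_d)$ first (while the hypothesis $p_c>p_m$ is still intact) before substituting $(p_c,p_m)$, the only cosmetic difference being that the paper carries out the monotonicity steps via discrete slack-variable computations rather than partial derivatives. One harmless slip: $E[T_1]-E[T_2]$ actually simplifies to $(p_c-p_m)/(p_cp_d+p_cp_m+p_mp_r)$ rather than the expression you wrote, but the sign conclusion is unaffected.
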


We first concentrate on the second inequality in the statement of the theorem which will follow immediately from the 
next lemma. It allows us to obtain the expected absorbing time of the MC if 
the exact values for the transition probabilities are known.
In particular, the lemma establishes the expected 
times $E[T_{1}]$ and $E[T_{2}]$ to reach the absorbing state, starting from 
the states $S_{1}$ and $S_{2}$ respectively. 
\begin{restatable}{lemma}{lemchain}
\label{lem:mchain}
 The expected times $E[T_{1}]$ and $E[T_{2}]$ to reach the absorbing state, 
starting  from state $S_{1}$ and  $S_{2}$ respectively are as follows:
  \begin{align*}
    E[T_{1}] &= \frac{p_c+p_r+p_d}{p_c p_d + p_c p_m + p_m p_r}\leq 
\frac{p_c+p_r}{p_c p_d + p_c p_m + p_m p_r}+\frac{1}{p_c}\\
    E[T_{2}] &= \frac{p_m+p_r+p_d}{p_c p_d + p_c p_m + p_m p_r}.
  \end{align*}
\end{restatable}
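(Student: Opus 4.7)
The plan is to apply standard first-step analysis to the three-state absorbing Markov chain of Figure~\ref{fig-mchain}. Starting from $S_1$, I would condition on the outcome of the first step (self-loop, transition to $S_2$, or absorption in $S_3$) to obtain
\begin{equation*}
E[T_1] = 1 + (1-p_m-p_d)\,E[T_1] + p_d\,E[T_2],
\end{equation*}
and analogously
\begin{equation*}
E[T_2] = 1 + (1-p_c-p_r)\,E[T_2] + p_r\,E[T_1].
\end{equation*}
Rearranging yields the $2\times 2$ linear system $(p_m+p_d)\,E[T_1] - p_d\,E[T_2] = 1$ and $-p_r\,E[T_1] + (p_c+p_r)\,E[T_2] = 1$, which I would then solve via Cramer's rule (or equivalently by substitution).

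The key algebraic step is to simplify the determinant of the coefficient matrix: $(p_m+p_d)(p_c+p_r) - p_d p_r = p_c p_m + p_m p_r + p_c p_d + p_d p_r - p_d p_r = p_c p_d + p_c p_m + p_m p_r$, so the $p_d p_r$ cross-term cancels exactly and we recover the common denominator in both closed forms. Cramer's rule then gives the numerators $p_c + p_r + p_d$ for $E[T_1]$ and $p_m + p_r + p_d$ for $E[T_2]$, matching the stated expressions.

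For the inequality, I would split the numerator of $E[T_1]$ as $(p_c+p_r) + p_d$ and reduce the claim to $p_d / (p_c p_d + p_c p_m + p_m p_r) \le 1/p_c$, i.e., $p_c p_d \le p_c p_d + p_c p_m + p_m p_r$, which is immediate from the nonnegativity of $p_c, p_m, p_r$.

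There is no real obstacle here; the whole argument is routine first-step analysis. The only place to be careful is the algebraic cancellation of $p_d p_r$ in the determinant, which is what causes the denominators of $E[T_1]$ and $E[T_2]$ to coincide; everything else follows mechanically.
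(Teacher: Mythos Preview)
Your proposal is correct and is essentially the same as the paper's own proof: the paper sets up the identical first-step recurrences (written as $E[T_1]=(E[T_2]+1)p_d+p_m+(1+E[T_1])(1-p_d-p_m)$ and the analogous equation for $E[T_2]$), solves the resulting $2\times 2$ system by substitution rather than Cramer's rule, and bounds the $p_d$ term in the numerator exactly as you do via $p_d/(p_c p_d + p_c p_m + p_m p_r)\le 1/p_c$.
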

\begin{proof}
We analyse the MC and using the law of total expectation together with the 
conditional probabilities we establish the following recurrence equations:
\begin{align*}
 E[T_1]&= (E[T_2]+1)  p_d +  p_m+ 
(1+ E[T_1]) (1- p_d- p_m) \\
E[T_2]&=  ( E[T_1]+1)p_r
+  p_c+(1+ E[T_2]) (1- p_c-p_r ).
\end{align*}

We start by solving the system of equations for the Markov chain.  In order to 
get an expression for $E[T_1]$, we will first express $E[T_2]$ in terms of 
$E[T_1]$.

\begin{align*}
& E[T_2] =  ( E[T_1]+1)p_r
+  p_c+(1+ E[T_2]) (1- p_c-p_r ), \\
\end{align*}
implying
\begin{align*}
& E[T_2]=\frac{(E[T_1]+1)p_r-p_r+1}{p_c+p_r} =\frac{p_r E[T_1] 
+1}{p_c+p_r}. 
\end{align*}

We now substitute the expression for $E[T_2]$ into the equation for $E[T_1]$:

\begin{align*}
 E[T_1] &= \left(\frac{p_r E[T_1] +1}{p_c+p_r}+1\right)  p_d +  p_m+ (1+ 
E[T_1]) 
(1- p_d- 
p_m).  \\
\end{align*}
Hence,
\begin{align*}
& E[T_1] =\frac{p_c+p_d+p_r}{p_c p_d + p_c p_m + p_m p_r}.
\end{align*}

The expression for $E[T_1]$ can be bounded from above by 
separating the $p_d$ term in the numerator:

\begin{align*}
 E[T_1]& =\frac{p_c+p_r}{p_c p_d + p_c p_m + p_m p_r}+\frac{p_d}{p_c p_d 
+ p_c p_m + p_m p_r}\\
    &\leq \frac{p_c+p_r}{p_c p_d + p_c p_m + p_m p_r}+\frac{p_d}{p_c p_d} 
    \\ &\leq \frac{p_c+p_r}{p_c p_d + p_c p_m + p_m p_r} +\frac{1}{p_c}.
\end{align*}

If we substitute the value of $E[T_1]$ in the above expression for $E[T_2]$ we 
obtain:
\begin{align*}
 E[T_2]&=\frac{\frac{p_r (p_c+p_d+p_r)}{p_c 
p_d+p_c p_m+p_m p_r}+1}{p_c+p_r}\\
 &= \frac{p_r (p_c+p_d+p_r)+p_c 
p_d+p_c p_m+p_m p_r}{(p_c+p_r) 
(p_c p_d+p_c p_m+p_m p_r)}\\
&= \frac{p_c (p_m+p_r+p_d)+p_r 
(p_m+p_r+p_d)}{(p_c+p_r) (p_c 
p_d+p_c p_m+p_m p_r)}\\
&=\frac{p_m+p_r+p_d}{p_c p_d+p_c 
p_m+p_m p_r}.
\end{align*}
\end{proof}
Before we prove the first inequality in the statement of Theorem \ref{lem:mcdom}, we will derive some helper propositions. We first show 
that as long as the transition probability of reaching the absorbing state from 
the state $S_2$ (with diversity) is greater than that of reaching the absorbing 
state from the state with no diversity $S_1$ (i.e., $p_m <p_c$), then the 
expected absorbing time from state $S_1$ is at least as large as the expected 
time unconditional of the starting point. This will allow us to achieve a 
correct upper bound on the runtime by just bounding the absorbing time from 
state $S_1$. In particular, it allows us to pessimistically assume that the 
algorithm starts each new fitness level in state $S_1$ (i.e., there is no 
diversity in the population).



\begin{restatable}{proposition}{promcone}\label{pro:MC1}
  Consider a Markov chain with the topology given in Figure~\ref{fig-mchain}. 
Let $E[T_1]$ and $E[T_2]$ be the expected absorbing times starting from state 
$S_{1}$ and $S_{2}$ respectively. If $p_m< p_c$, then $E[T_1]>E[T_2]$ and 
$E[T]$, the unconditional expected absorbing time, satisfies $E[T]\leq 
E[T_1]$.
 \end{restatable}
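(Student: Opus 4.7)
The proof of Proposition~\ref{pro:MC1} should be almost immediate from Lemma~\ref{lem:mchain}, so my plan is to reduce it to a direct comparison of the two closed-form expressions and then invoke the law of total expectation.

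First I would recall the two formulas derived in Lemma~\ref{lem:mchain}, namely
\[
E[T_1] \;=\; \frac{p_c+p_d+p_r}{p_c p_d+p_c p_m+p_m p_r}, \qquad E[T_2] \;=\; \frac{p_m+p_d+p_r}{p_c p_d+p_c p_m+p_m p_r}.
\]
These two quantities share the same (positive) denominator, so comparing them reduces to comparing their numerators. The assumption $p_m < p_c$ immediately gives $p_c+p_d+p_r > p_m+p_d+p_r$, and therefore $E[T_1] > E[T_2]$. This settles the first claim.

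For the second claim, I would observe that before absorption the chain can only be in one of the two transient states $S_1$ and $S_2$. Hence by the law of total expectation
\[
E[T] \;=\; \Pr(\text{start in }S_1)\,E[T_1] \;+\; \Pr(\text{start in }S_2)\,E[T_2],
\]
which is a convex combination of $E[T_1]$ and $E[T_2]$. Since we have just established that $E[T_1]$ is the larger of the two, the convex combination is at most $E[T_1]$, giving $E[T] \leq E[T_1]$ as required.

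There is essentially no obstacle here beyond being explicit about what the ``unconditional'' expectation means (i.e., with respect to an arbitrary initial distribution supported on the transient states), and about the fact that both denominators are identical so the comparison of numerators is valid. I would therefore keep the proof short — a two-line computation for the numerator comparison followed by the one-line convex-combination argument — and move on.
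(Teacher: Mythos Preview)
Your proposal is correct and follows essentially the same approach as the paper's own proof: invoke the closed-form expressions from Lemma~\ref{lem:mchain}, compare numerators over the common denominator to get $E[T_1]>E[T_2]$, and then use the law of total expectation to conclude $E[T]\le E[T_1]$. There is nothing to add.
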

 \begin{proof}
 From Lemma~\ref{lem:mchain},
  \begin{align*}
  E[T_1]&=\frac{p_c+p_d+p_r}{p_c p_d + p_c p_m + p_m p_r} \\
  E[T_2]&=\frac{p_m+p_d+p_r}{p_c p_d + p_c p_m + p_m p_r} .
  \end{align*}
Since the denominators in both expressions are the same, $E[T_1]>E[T_2]$ 
follows from $p_c+p_d+p_r > p_m+p_d+p_r$, which in turn follows from $p_c 
> p_m$. 
The unconditional expected absorbing time is calculated as the weighted sum 
$E[T]=p\cdot E[T_1] + (1-p) \cdot E[T_2]$ where  $p$ is the probability that 
the initial state is $S_1$ and $1-p$ is the probability that the initial state 
is $S_2$. Since $E[T_1]\geq E[T_2]$, the weighted sum $E[T]$ is also smaller 
than or equal to $E[T_1]$.
 \end{proof}
  
 In the following proposition we show that if we overestimate the probability of 
losing diversity and underestimate the probability of increasing it, then we 
achieve an upper bound on the expected absorbing time as long as $p_m < p_c$. 
Afterwards, in Proposition \ref{pro:MC3} we show that an upper bound on the 
absorbing time is also achieved if the probabilities $p_c$ and $p_m$ are 
underestimated.

\begin{restatable}{proposition}{promctwo}\label{pro:MC2}
 Consider two  Markov chains $M$ and $M'$ with the topology in 
Figure~\ref{fig-mchain} where the transition probabilities for $M$ are $p_c$, 
$p_m$, $p_d$, $p_r$ and the transition probabilities for $M'$ are $p_c$, 
$p_m$, $p_d'$ and $p_{r}'$. Let the expected absorbing times starting from 
state $S_{1}$ for $M$ and $M'$ 
be $E[T_1]$ and $E[T_{1}']$ respectively.  If $p_d'\leq p_d$,  $p_r' \geq p_r$ 
and $p_m < p_c$, then $E[T_1] \leq E[T_{1}']$.
\end{restatable}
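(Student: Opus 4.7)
The plan is to obtain the statement directly from the closed-form expression for $E[T_1]$ given in Lemma~\ref{lem:mchain}, by exhibiting the monotonicity of that expression in $p_d$ and $p_r$ under the hypothesis $p_m<p_c$. Since $p_c$ and $p_m$ are the same in both chains, passing from $M$ to $M'$ amounts to replacing $(p_d,p_r)$ by $(p_d',p_r')$ with $p_d'\le p_d$ and $p_r'\ge p_r$, so a componentwise monotonicity suffices.

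First, I would recall from Lemma~\ref{lem:mchain} that
\[
E[T_1] \;=\; \frac{p_c+p_d+p_r}{p_c p_d + p_c p_m + p_m p_r},
\]
and view the right-hand side as a function $f(p_d,p_r)$ with $p_c,p_m$ treated as fixed positive constants. Then I would compute $\partial f/\partial p_d$: the quotient rule yields a denominator which is positive and a numerator equal to
\[
(p_c p_d + p_c p_m + p_m p_r) - p_c (p_c+p_d+p_r) \;=\; (p_c+p_r)(p_m-p_c),
\]
which is strictly negative by the assumption $p_m<p_c$. Hence $f$ is strictly decreasing in $p_d$, so $p_d'\le p_d$ implies $f(p_d',p_r)\ge f(p_d,p_r)$.

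Next, I would compute $\partial f/\partial p_r$ analogously; the numerator simplifies to
\[
(p_c p_d + p_c p_m + p_m p_r) - p_m (p_c+p_d+p_r) \;=\; p_d(p_c-p_m),
\]
which is strictly positive under $p_m<p_c$. Thus $f$ is strictly increasing in $p_r$, and $p_r'\ge p_r$ gives $f(p_d',p_r')\ge f(p_d',p_r)$. Chaining the two inequalities yields $E[T_1]=f(p_d,p_r)\le f(p_d',p_r')=E[T_1']$, which is the claim.

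There is no real obstacle here; the only subtlety is making sure the sign analysis is carried out carefully, which is why the condition $p_m<p_c$ appears explicitly in the hypothesis. An alternative would be a coupling argument on the two chains started from $S_1$, but given that Lemma~\ref{lem:mchain} already provides a clean closed form, the monotonicity route is the shortest and keeps the dependence on the hypothesis $p_m<p_c$ transparent, paving the way for the analogous argument needed in Proposition~\ref{pro:MC3} where $p_c$ and $p_m$ are perturbed instead.
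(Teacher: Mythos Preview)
Your proof is correct and follows essentially the same approach as the paper: both start from the closed form of Lemma~\ref{lem:mchain} and show that $E[T_1]$ is decreasing in $p_d$ and increasing in $p_r$ when $p_m<p_c$. The only cosmetic difference is that the paper introduces slack variables $d,r\ge 0$ and computes the difference $E[T_1']-E[T_1]$ directly, factoring the resulting numerator as $(p_c-p_m)(p_d r+p_c d+p_r d)$, whereas you take partial derivatives and chain the two one-variable monotonicities; the underlying algebra is identical.
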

\begin{proof}
   Let $r$ and $d$ be non-negative slack variables such that $p_d' = p_d-d$,  
$p_r' = p_r+r$.
 We prove the claim that the absorbing times
 \begin{align*}
  E[T_1]&=\frac{p_c+p_d+p_r}{p_c p_d + p_c p_m + p_m p_r}, \\
  E[T_{1}']&=\frac{p_c+(p_d-d)+(p_r+r)}{p_c (p_d-d) + p_c p_m + p_m (p_r+r)}, 
  \end{align*}
  satisfy
  \begin{align*}
   E[T_{1}']-E[T_1]\geq 0.
  \end{align*}
For readability purposes let $A=p_c+p_d+p_r$ and $B=p_c p_d + p_c p_m + p_m 
p_r$. Then,

\begin{align*}
E[T_{1}']&=\frac{p_c+p_d+p_r -d + r}
					{p_c p_d + p_c p_m + p_m 
p_r- p_c d+p_m r}\\
&=\frac{A+ (r - d)}
					{B- p_c d+p_m r},\\
%
E[T_{1}']-E[T_1]&=\frac{A+(r-d)}{B-p_c d+p_m r}-\frac{A}{B}\\ &=\frac{B r-B 
d+A p_c d-A p_m r}{B (B-p_c d+p_m r)}.
\end{align*}
Since the denominator is the product of the denominators of 
$E[T_{1}']$ and $E[T_1]$, we already know that it is positive.  
We now show that:
\begin{equation*}
 B r-B d+A p_c d-A p_m r \geq 0.
\end{equation*}
If we insert the values of $A$ and $B$ we obtain:
\begin{align*}
  B r-B d+A p_c d-A p_m r  &=    ( p_c p_d + p_c p_m + p_m p_r) r -( p_c p_d + p_c p_m + p_m p_r) d\\
&+(p_c+p_d+p_r) p_c d-(p_c+p_d+p_r) p_m r\\
=& p_c p_d r -p_c p_m d - p_m p_r d + p_{c}^{2} d + p_r p_c d - p_d p_m r\\
=& p_d r (p_c-p_m) +  p_c d (p_c-p_m)+ p_r d  (p_c-p_m).
\end{align*}
According to our assumption $p_c-p_m > 0$ the proposition follows because the 
probabilities and slack variables are non-negative. 
\end{proof}

\begin{restatable}{proposition}{promcthree}
\label{pro:MC3}
Consider two  Markov chains $M$ and $M'$ with the topology in 
Figure~\ref{fig-mchain} where the transition probabilities for $M$ are $p_c$, 
$p_m$, $p_d$, $p_r$ and the transition probabilities for $M'$ are $p_c'$, 
$p_m'$, $p_d$ and $p_r$. Let the expected absorbing times starting from 
state $S_{1}$ for $M$ and $M'$ be $E[T_1]$ and $E[T_{1}']$ respectively.  If 
$p_c'\leq p_c$ and  $p_m' \leq p_m$, then $E[T_1] \leq E[T_{1}']$.
\end{restatable}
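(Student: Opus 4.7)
The plan is to mirror the slack-variable approach used in Proposition~\ref{pro:MC2}, this time perturbing $p_c$ and $p_m$ rather than $p_d$ and $p_r$. Introduce non-negative slacks $c=p_c-p_c'$ and $m=p_m-p_m'$. By Lemma~\ref{lem:mchain},
\[
E[T_1]=\frac{p_c+p_d+p_r}{p_c p_d + p_c p_m + p_m p_r},\qquad
E[T_1']=\frac{(p_c-c)+p_d+p_r}{(p_c-c)p_d+(p_c-c)(p_m-m)+(p_m-m)p_r},
\]
so the claim $E[T_1']\geq E[T_1]$ is a purely algebraic monotonicity assertion about a rational function in $(p_c,p_m)$ with $p_d,p_r$ fixed.

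Rather than handle both perturbations at once, I would split the argument into two steps via an intermediate chain $M''$ with probabilities $(p_c',p_m,p_d,p_r)$. If $E[T_1'']\geq E[T_1]$ (only $p_c$ is decreased) and $E[T_1']\geq E[T_1'']$ (only $p_m$ is then decreased) the result follows by transitivity. For the first step, writing $A=p_c+p_d+p_r$ and $B=p_c p_d + p_c p_m + p_m p_r$, bringing $E[T_1'']-E[T_1]$ over a common denominator reduces the sign to that of $c\bigl(A(p_d+p_m)-B\bigr)$, and a direct expansion collapses $A(p_d+p_m)-B$ to $p_d(p_d+p_m+p_r)\geq 0$. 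For the second step, $p_m$ only occurs in the denominator of $E[T_1]$, so the analogous calculation gives a difference whose numerator is $A\,m\,(p_c'+p_r)$, manifestly non-negative.

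The main obstacle is the bookkeeping: one must verify that the denominators of $E[T_1'']$ and $E[T_1']$ remain strictly positive after subtracting the slack (which follows from $p_c',p_m'\geq 0$ and the validity of $M'$ as a Markov chain), and track the cancellations carefully so that the two key identities $A(p_d+p_m)-B=p_d(p_d+p_m+p_r)$ and the second-step numerator indeed emerge. No probabilistic ingredient beyond the closed form provided by Lemma~\ref{lem:mchain} is required, and the hypothesis $p_m<p_c$ used in Proposition~\ref{pro:MC2} is not needed here because neither perturbation involves the $p_d,p_r$ transitions whose effect depended on that ordering.
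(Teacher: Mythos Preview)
Your argument is correct. Both your proof and the paper's start from the closed form of Lemma~\ref{lem:mchain} and introduce the same non-negative slacks $c=p_c-p_c'$, $m=p_m-p_m'$; the difference is that the paper treats both perturbations simultaneously, writing $E[T_1']-E[T_1]$ over a common denominator and expanding the resulting numerator into a sum of manifestly non-negative terms, whereas you factor through the intermediate chain $M''=(p_c',p_m,p_d,p_r)$ and handle each slack separately. Your two-step route makes the algebra lighter: the first step collapses to the identity $A(p_d+p_m)-B=p_d(p_d+p_m+p_r)$, and the second step is almost immediate since $p_m$ appears only in the denominator of $E[T_1]$. The paper's single expansion avoids introducing the auxiliary chain but requires tracking more cross-terms before the cancellations emerge. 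Your observation that the hypothesis $p_m<p_c$ is unnecessary here is also correct and matches the paper, which does not invoke it in this proof.
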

\begin{proof}
 Let $c$ and $m$ be non-negative slack variables such that $p_c' + c= p_c$,  
$p_m' +m= p_m$. Similarly to the proof of Proposition~\ref{pro:MC2}, we prove 
the claim that the absorbing times
 \begin{align*}
  E[T_{1}']&=\frac{p_c'+p_d+p_r}{p_c' p_d + p_c' p_m' + p_m' p_r}, \\
  E[T_{1}]&=\frac{(p_c'+c)+p_d+p_r}{(p_c'+c)p_d + (p_c'+c) (p_m'+m) + (p_m'+m) 
p_r}, 
  \end{align*}
  satisfy
\begin{align*}
   E[T_{1}']-E[T_{1}]\geq 0.
\end{align*}
Again for readability purposes let $A=p_c'+p_d+p_r$ and $B=p_c' p_d + p_c' p_m' 
+ 
p_m' p_r$. Then,
\begin{align*}
 & E[T_{1}']-E[T_{1}]=\frac{A}{B}-\frac{A+c}{B+c p_d+ 
p_c' m+c p_m'+c m+m p_r}\\
=&\frac{A c p_d+ A p_c' m+ A c p_m'+ A c m+ Am p_r-B c}{B 
(B+c p_d+ p_c' m+c p_m'+c m+m p_r)}.
\end{align*}
Since the denominator is positive we focus on proving that the numerator ($N$) 
is also positive
\begin{align*}
N=A c p_d+ A p_c' m+ A c p_m'+ A c m+ Am p_r-B c \geq 0.
\end{align*}
Substituting the actual values for $A$ and $B$, we obtain the following 
equivalent expression:
\begin{align*}
N=& (p_c'+p_d+p_r) c p_d+ (p_c'+p_d+p_r) p_c' m\\ &+ (p_c'+p_d+p_r) c p_m'+ 
(p_c'+p_d+p_r) c m \\ &+ (p_c'+p_d+p_r) m p_r - (p_c' p_d + p_c' p_m' 
+ 
p_m' p_r) c \\
=& (p_d+p_r) c p_d+ (p_c'+p_d+p_r) p_c' m\\ &+ (p_d+p_r) c p_m'+ (p_c'+p_d) c m 
+ (p_c'+p_d+p_r) m p_r .
\end{align*}
Since all of the above terms are positive the proposition follows.

\end{proof}

The propositions use that by lower bounding $p_d$ and upper bounding 
$p_r$ we overestimate the expected number of generations the 
population is in state $S_{1}$ compared to the time spent in state $S_{2}$. 
Hence, if $p_c > p_m$ we can safely use a lower bound for $p_d$ and 
an upper bound for $p_r$  and still obtain a valid upper bound on the runtime  
$E[T_{1}]$. This is rigorously shown by combining together 
the results of the previous propositions to prove the main result i.e., Theorem \ref{lem:mcdom}.

\begin{proof}[Proof of Theorem \ref{lem:mcdom}]
Consider a third Markov chain $M^{*}$ whose transition probabilities are $p_c$, 
$p_m$, $p_r'$, $p_d'$. Let the absorbing time of $M$ starting from state 
$S_{1}$ be $E[T_1]$. In order to prove the above statement we will prove the 
following sequence of inequalities.

\begin{equation*}
 E[T]\leq E[T_1]\leq E[T_{1}^{*}]\leq E[T_{1}'].
\end{equation*}

According to Proposition~\ref{pro:MC1},
$E[T]\leq E[T_1]$ since $p_c>p_m$. 
According to Proposition~\ref{pro:MC2} 
$E[T_1]\leq E[T_{1}^{*}]$ since $p_d'\leq p_d$, $p_r'\geq p_r$ and 
$p_c>p_m$. 
Finally, according to Proposition~\ref{pro:MC3},
$p_c'\leq p_c$ and $p_m'\leq p_m$ implies  $E[T_{1}^{*}]\leq E[T_{1}']$ 
and our proof is completed by using Lemma \ref{lem:mchain} to show that the last inequality of the statement holds.
\end{proof}

The algorithm may skip some levels or a new fitness level may be found before the whole population has reached the current fitness level.
Hence, by summing up the expected runtimes to leave each of the $n+1$ levels and the expected times for the whole population to takeover each level,
we obtain an upper bound on the expected runtime.
The next lemma establishes an upper bound on 
the expected time it takes to move from the absorbing state of the previous 
Markov chain ($S_{ 3,i}$) to any transient state ($S_{ 1,i+1}$ or $S_{ 2,i+1}$) 
of the next Markov chain. 
The lemma uses standard takeover arguments originally introduced in the first analysis of the ($\mu$+1)~EA for \om \cite{Witt2006}.
To achieve a tight upper bound Witt had to carefully wait for only a fraction of the population to take over a level before the next level was discovered.
In our case, the calculation of the transition probabilities of the MC is actually simplified if we wait for the whole population to take over each level. 
Hence in our analysis the takeover time calculations are more similar
 to the first analysis of the ($\mu$+1)~EA with and without diversity mechanisms 
to takeover the local optimum of
{\text{\sc TwoMax}\xspace} \cite{FriedrichOlivetoSudholtWittECJ2009}. 
%
%
\begin{restatable}{lemma}{lemto}\label{lem-to}
 Let the best individual of the current population be in level $i$ and all 
individuals be in level at least $i-1$. 
Then, the expected time for the whole population to be in level at least $i$ is 
$\mathcal{O}(\mu \log{\mu})$. 
\end{restatable}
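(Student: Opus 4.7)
The plan is to track $j_t$, the number of population individuals at fitness level $\geq i$ at the start of generation $t$, and perform a fitness-level style analysis. By elitism $j_t$ is non-decreasing and, by hypothesis, $j_0 \geq 1$; the goal is to bound the expected hitting time $\tau$ of the event $j_t = \mu$. The first observation is that while $j < \mu$ the lemma's hypothesis forces the remaining $\mu - j$ individuals to sit at level exactly $i-1$, so whenever the newly created offspring $z$ satisfies $\om(z) \geq i$ the unique worst element of $P \cup \{z\}$ is removed from level $i-1$ and $j$ strictly increases. It therefore suffices to lower bound $\Pr(\om(z) \geq i \mid j_t = j)$ by $\Omega(j/\mu)$ and then sum the resulting expected waiting times.

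To establish this lower bound I will decompose $\Pr(\om(z) \geq i)$ into three factors. First, by condition~(\ref{select}) every individual at level $\geq i$ has selection probability at least that of every individual at level $i-1$; since the $\mu$ probabilities sum to $1$, each high-level individual is selected with probability at least $1/\mu$, and hence $\Pr(\text{at least one of the two parents is at level}\geq i) \geq 1 - (1 - j/\mu)^2 \geq j/\mu$. Second, whenever one parent $x$ is at level $\geq i$ and the other $y$ at level $\geq i-1$, let $A, B$ be their $1$-sets; uniform crossover produces $z'$ with $\om(z') = |A \cap B| + X$ where $X \sim \mathrm{Bin}(|A \triangle B|, 1/2)$, so the event $\om(z') \geq i$ reads $X \geq i - |A \cap B|$. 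Since $|A| + |B| \geq 2i - 1$ the threshold satisfies $i - |A \cap B| \leq \lceil |A \triangle B|/2 \rceil$, and hence by the symmetry of the binomial $\Pr(\om(z') \geq i) \geq 1/2$. Third, standard bit mutation leaves $z'$ unchanged with probability $(1 - c/n)^n \geq e^{-c}(1 - o(1))$. Multiplying the three factors gives
$$
\Pr(\om(z) \geq i \mid j_t = j) \;\geq\; \frac{j\, e^{-c}}{2\mu}(1 - o(1)).
$$

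Summing expected waiting times for each unit increase of $j$ then yields
$$
E[\tau] \;\leq\; \sum_{j=1}^{\mu - 1} \frac{2\mu e^c}{j}(1 + o(1)) \;=\; O(\mu \log \mu),
$$
which is the claimed bound. The main obstacle is the middle symmetry factor: one must verify the threshold inequality $i - |A \cap B| \leq \lceil |A \triangle B|/2 \rceil$ carefully across all sub-cases (both parents at level $\geq i$, possibly at different levels; or one parent at level $\geq i$ and the other at level $i-1$), since $|A| + |B|$ may be odd. Without this observation, the naive ``same parent selected twice, no mutation'' argument alone would contribute only $j/\mu^2$ to the improvement probability and so lose an extra factor of $\mu$, giving only the weaker bound $O(\mu^2 \log \mu)$ on the takeover time, which would in turn shrink the admissible range of $\mu$.
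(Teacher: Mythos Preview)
Your argument is correct and follows the same takeover-time template as the paper: track the count $j$ of individuals at level $\geq i$, lower-bound the one-step improvement probability by $\Omega(j/\mu)$ via (i) selecting a high-level parent, (ii) crossover preserving level $\geq i$ by binomial symmetry, (iii) mutation doing nothing, and then sum the harmonic series.

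Two small remarks. First, your crossover bound is actually tighter than the paper's: your unified threshold inequality $i-|A\cap B|\le\lceil|A\triangle B|/2\rceil$ gives $1/2$ in all sub-cases, whereas the paper splits into the cases ``other parent at level $i$'' and ``other parent at level $i-1$'' and obtains only $1/4$ in the latter (it decomposes $\mathrm{Bin}(2d{+}1,1/2)$ as $\mathrm{Bin}(2d,1/2)+\mathrm{Bin}(1,1/2)$ rather than invoking symmetry directly). This does not affect the $O(\mu\log\mu)$ conclusion but is a cleaner argument.

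Second, the intermediate claim ``each high-level individual is selected with probability at least $1/\mu$'' is not literally guaranteed by condition~(\ref{select}) once, during takeover, some offspring overshoots to a level strictly above $i$: the level-$i$ individuals can then receive arbitrarily small mass under, say, fitness-proportional selection. What you actually need, and what does follow from~(\ref{select}) by a one-line averaging argument, is that the \emph{total} selection mass on the $j$ individuals of highest fitness is at least $j/\mu$; from this $\Pr(\text{at least one high-level parent})\geq 1-(1-j/\mu)^2\geq j/\mu$ follows exactly as you wrote. With that adjustment the proof is complete.
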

\begin{proof}
  Let $k$ be the number of individuals of the population at 
fitness level $i$.  
Assume that one these $k$ solutions is selected as a parent. If the other 
parent is also on level $i$ but has a different genotype, then the Hamming 
distance between the parents is equal to $2d$ for some $d \in \mathbb{N}$ 
and the number of $1$-bits in the outcome of the crossover operator is 
$i-d$ plus a binomially distributed random variable with parameters $2d$ and 
$1/2$. With probability at least $1/2$ this random variable is larger or equal 
to $d$ due to the symmetry of the binomial distribution. If the other parent is 
on level $i-1$ then the Hamming distance between parents is $2d+1$ while the 
number of $1$-bits in the outcome of the crossover operator is $i-d-1$ plus a 
binomially distributed random variable with parameters $2d+1$ and 
$1/2$. The probability that the first $2d$ trials to have an outcome larger 
than $d$ is $1/2$. On top of that, for the offspring to have $i$ $1$-bits 
it is necessary that the fitter parent is picked for the final bit 
position with probability $1/2$. Hence, if at least one $i$ 
level solution is picked as a parent then with
probability at least $1/4$, the outcome of the crossover operator has at least 
$i$ $1$-bits. If the following mutation does not flip 
any bits, then a new solution with 
$i$ or more $1$-bits is added to the population. The 
solution will be accepted by selection unless the population has already been 
taken over. 
The probability that a solution at level $i$  is picked 
as a parent is at least $2k/\mu$ 
and 
the probability that mutation does 
not flip any bits is $(1-c/n)^n \geq 1/(e^c+1)$. So the expected time between 
adding the $k$th and 
the ($k+1$)th $i$-level solution to the population is less than 
$2(e^c+1)(\mu/k)$. 
By summing over all $k\in \{1,\ldots,\mu-1\}$, we obtain the following upper 
bound for 
the whole population to take over level $i$: \begin{align*} \sum_{k=1}^{\mu-1} 
2(e^c+1)(\mu/k)  
&\leq 2(e^c+1) \mu 
\sum_{k=1}^{\mu-1}1/k \\ 
&\leq 2(e^c+1) \mu \cdot \bo{(\log{\mu})}= \mathcal{O}(\mu \log{\mu}). 
\end{align*}
\end{proof}

The lemma shows that, once a new fitness level is discovered for the first time, 
it takes at most $\bo(\mu \log{\mu})$ generations until the whole population 
consists of individuals from the newly discovered fitness level or higher. While 
the absorption time of the Markov chain might decrease with the population size, 
for too large population sizes, the upper bound on the expected total take over 
time will dominate the runtime. As a result the MC framework will deliver larger 
upper bounds on the runtime unless the expected time  until the 
population takes over the fitness levels is asymptotically smaller than the 
expected absorption time of all MCs. For this reason, our results will require 
population sizes of $\mu=o(\log{n}/\log{\log{n}})$, to allow all fitness 
levels to be taken over in expected $o(n \log{n})$ time such that the latter 
time does not affect the leading constant of the total expected runtime.
\section{Upper Bound}
\label{sec:upperbound}
\makeatletter{}
In this section we use the Markov Chain framework devised in Section 
\ref{sec:framework} to prove that the \muGA is faster than any 
standard bit mutation-only 
($\mu+\lambda$)~EA.

In order to satisfy the requirements of Theorem~\ref{lem:mcdom}, we first show in 
Lemma~\ref{lem:easy} that $p_c>p_m$ if the population is at one of the final 
$n/(4c(1+e^{c}))$ fitness levels. 
The lemma also shows that it is easy for the algorithm to reach such a fitness 
level. 
Afterwards we bound the 
transition probabilities of the MC in Lemma \ref{lem:prob}. We conclude the section by 
stating and proving the main result, essentially by applying Theorem~\ref{lem:mcdom} with the transition probabilities calculated in Lemma \ref{lem:prob}. 

\begin{restatable}{lemma}{lemeasy}\label{lem:easy}

For the \muGA with mutation rate $c/n$ for any constant $c$, if the population is in any fitness 
level 
$i>n-n/(4c(1+e^{c}))$, then $p_c$ is always larger than $p_m$. The expected time 
for the \muGA to sample a solution in fitness level 
$n-n/(4c(1+e^{c}))$ for the first time is $\mathcal{O}(n\mu \log{\mu} )$.

\end{restatable}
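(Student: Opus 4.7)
The plan is to prove the two parts separately. For the first claim I would upper bound $p_m$ trivially and isolate the crossover contribution to $p_c - p_m$ through an algebraic identity that cancels all mutation-only terms.

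In state $S_1$ crossover returns the common genotype $x \in L_i$ unchanged, so improvement must come from mutation; a union bound over the $n-i$ zero-bits of $x$ yields $p_m \leq (n-i)c/n$. For $p_c$ I would adopt the paper's pessimistic assumption that $S_2$ consists of $\mu-1$ copies of some $x$ and one diverse individual $y$ at Hamming distance exactly $2$ (larger distances only help crossover by symmetry of the binomial). Because condition \eqref{select} forces uniform selection among equal-fitness individuals, enumerating the four ordered parent pairs and using the fact that same-type pairs $(x,x)$ and $(y,y)$ each leave the pre-mutation offspring at a level-$i$ genotype (so mutation improves with probability exactly $p_m$) gives
\[
p_c = \left(\left(\tfrac{\mu-1}{\mu}\right)^{\!2} + \left(\tfrac{1}{\mu}\right)^{\!2}\right) p_m + \tfrac{2(\mu-1)}{\mu^2}\, P_{\mathrm{xo}},
\]
where $P_{\mathrm{xo}}$ is the improvement probability when the two parents are one copy of each type. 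Since the coefficients above sum to $1$, this rearranges to the clean identity
\[
p_c - p_m \;=\; \tfrac{2(\mu-1)}{\mu^2}\,(P_{\mathrm{xo}} - p_m),
\]
and, since $\mu \geq 2$, the problem reduces to showing $P_{\mathrm{xo}} > p_m$. I would lower bound $P_{\mathrm{xo}}$ by the single event that uniform crossover sets both differing bits to $1$ (probability $1/4$) followed by mutation flipping no bit at all (probability $(1-c/n)^n \geq 1/(1+e^c)$), giving $P_{\mathrm{xo}} \geq 1/(4(1+e^c))$. Combined with $p_m \leq (n-i)c/n$, this yields $P_{\mathrm{xo}} > p_m$ whenever $(n-i)c/n < 1/(4(1+e^c))$, i.e.\ whenever $i > n - n/(4c(1+e^c))$.

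For the second claim I would combine the fitness-level method with Lemma~\ref{lem-to}. Decompose the expected time to reach level $n - n/(4c(1+e^c))$ as a sum over $i$ of (a) the expected takeover time after the best first hits level $i$, and (b) the expected improvement time once the whole population sits at level $\geq i$. Lemma~\ref{lem-to} gives (a)$\,=\,\mathcal{O}(\mu\log\mu)$ per level, contributing $\mathcal{O}(n\mu\log\mu)$ in total. For (b), when the whole population is at level $\geq i$ and the best equals $i$ all $\mu$ individuals must lie in $L_i$, so selecting the same individual as both parents (probability $1/\mu$), crossover preserves it, and mutation flipping exactly one $0$-bit give improvement probability at least $(n-i)(c/n)(1-c/n)^{n-1}/\mu = \Omega((n-i)/(n\mu))$. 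The per-level expected improvement time is $\mathcal{O}(n\mu/(n-i))$, and summing this over $i$ up to the threshold collapses to $\mathcal{O}(n\mu) \cdot \sum_{j=n/(4c(1+e^c))+1}^{n} 1/j = \mathcal{O}(n\mu)$, since the harmonic tail is $\ln(4c(1+e^c)) = \mathcal{O}(1)$. The total is $\mathcal{O}(n\mu\log\mu)$.

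The hard part will be deriving the identity for $p_c - p_m$: one must notice that the mutation-only contributions from same-type parent pairs in $S_2$ are arithmetically identical to $p_m$ itself, so that they cancel on subtraction and leave only the crossover-specific term to compare, which is exactly what removes the $\mu$-dependence from the threshold $n/(4c(1+e^c))$.
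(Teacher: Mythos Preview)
Your proposal is correct and follows essentially the same approach as the paper. For the first claim, the paper writes $p_c > p'\cdot\tfrac14(1-c/n)^n + (1-p')\,p_m$ with an abstract probability $p'$ of selecting two distinct genotypes and simplifies to the condition $p_m < \tfrac14(1-c/n)^n$; your identity $p_c-p_m=\tfrac{2(\mu-1)}{\mu^2}(P_{\mathrm{xo}}-p_m)$ is exactly this rearrangement with $p'=2(\mu-1)/\mu^2$ made explicit (which is legitimate since condition~\eqref{select} indeed forces uniform selection among equal-fitness slots), and your lower bound $P_{\mathrm{xo}}\ge\tfrac14(1-c/n)^n$ is the paper's Eq.~\eqref{eq:xo} specialised to Hamming distance~$2$. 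For the second claim, the paper argues the improvement probability is $\Omega(1)$ directly (since crossover of two level-$i$ parents yields at least $i$ one-bits with probability $\ge 1/2$, after which mutation improves with constant probability because $n-i=\Omega(n)$), whereas you use the weaker $\Omega(1/\mu)$ via the same-parent event; both routes give $\mathcal{O}(n\mu\log\mu)$ because your harmonic tail $\sum_{j>n/(4c(1+e^c))}1/j=\mathcal{O}(1)$ absorbs the extra $\mu$ factor and the takeover contribution already dominates.
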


\begin{proof}
We consider the probability $p_c$. If two individuals on the same fitness level 
with non-zero Hamming distance $2d$ are selected as parents with probability  
$p'$, then the probability that the crossover operator yields an improved 
solution is at least (see proof of Theorem 4 in \cite{Sudholt2015}):
\begin{align}
\Pr(X>d)=&\frac{1}{2} \left(1-\Pr(X=d)\right)= 
\frac{1}{2}\left(1-2^{-2d} {2d 
\choose d}\right) \geq 1/4, \label{eq:xo}
\end{align}
where $X$ is a binomial random variable with parameters $2d$ and $1/2$ which 
represents the number of bit positions where the parents differ and which are 
set to $1$ in the offspring. With probability 
$(1-c/n)^n$ no bits are flipped and 
the absorbing state is reached. If any individual is selected twice as parent, 
then the improvement can only be achieved  by  mutation (i.e., with 
probability $p_m$) since crossover is ineffective. So $p_c>p'  
(1/4)(1-c/n)^n+(1-p') p_m$, hence if $p_m< p'  (1/4)(1-c/n)^n+(1-p') p_m$ it 
follows that $p_m < p_c$. The condition can be simplified to $p_m < 
(1/4)(1-c/n)^n$ with simple algebraic manipulation. For large enough 
$n$, $(1-c/n)^n \geq 1/(1+e^{c})$ and the condition reduces to $p_m <  
1/(4(1+e^{c})) $. 

Since $p_m<(n-i) c/n$ is an upper bound on the transition probability (i.e., at 
least one of the zero bits has to flip to increase the $\om$ value), the 
condition is satisfied for  $i \geq n-n/(4c(1+e^{c}))$. For any level $i\leq 
n-n/(4c(1+e^{c}))$, after the take over of the level occurs in $\bo(\mu 
\log{\mu})$ expected time, the probability of 
improving is at least $\Omega(1)$ due to the linear number of $0$-bits that can 
be flipped. Hence, we can upper bound  the total number of generations 
necessary 
to reach fitness level $i=n-n/(4c(1+e^{c}))$  by $\mathcal{O}(n\mu \log{\mu})$.

\end{proof}

The lemma has shown that $p_c > p_m$ holds after a linear number of fitness 
levels have been traversed. Now, we bound the transition probabilities of the 
Markov chain.

\begin{restatable}{lemma}{lemprob}\label{lem:prob}
Let $\mu\geq3$. Then the transition probabilities $p_d$, $p_c$, $p_r$ and $p_m$ are bounded as 
follows:

\begin{align*}
p_d &\geq \frac{\mu}{(\mu+1)} \frac{i (n-i)c^2  }{n^2 (e^{c}+\bo(1/n)) }, && 
p_c 
\geq  \frac{\mu-1}{2 \mu^2 (e^{c}+\bo(1/n))},\\
p_r &\leq  \frac{(\mu-1) \big(2 \mu -1 +\bo(1/n)\big)}{2 e^{c} \mu^2 
(\mu+1) }, && p_m \geq \frac{c (n-i)}{n (e^c+\bo(1/n)) }.
\end{align*}

\end{restatable}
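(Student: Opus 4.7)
The plan is to bound each of the four transition probabilities by isolating a single dominant event (supplemented, in the case of $p_r$, by a clean factoring calculation) and then applying the standard estimate $(1-c/n)^{n-k}\geq 1/(e^c+\bo(1/n))$ for any constant $k$. Two of the bounds concern the state $S_{1,i}$, in which the $\mu$ members of the population are identical copies of some $x$ with $i$ ones; crossover of any two such copies returns $x$, so a generation effectively collapses to a single standard bit mutation of $x$.

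For $p_m$ I would lower bound by the event ``mutation flips exactly one of the $n-i$ zero-bits of $x$ and leaves every other bit alone,'' contributing $(n-i)(c/n)(1-c/n)^{n-1}$; the resulting improvement is accepted deterministically by elitism. For $p_d$ the analogous event is ``flip exactly one zero-bit and exactly one one-bit of $x$ and nothing else,'' which yields an offspring of the same fitness $i$ but a different genotype; this probability would then be multiplied by the probability $\mu/(\mu+1)$ that the uniform random tie-break over the $\mu+1$ equal-fitness individuals preserves the new offspring rather than replacing it with an older copy.

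For $p_c$ I would work inside $S_{2,i}$ under the pessimistic assumption that the population consists of $\mu-1$ copies of some $x$ and a single diverse individual $y$, both with $i$ ones and at Hamming distance $2d\geq 2$. Because condition~(\ref{select}) forces equal-fitness individuals to be sampled with equal probability $1/\mu$, a mixed parent pair is chosen with probability $2(\mu-1)/\mu^2$. Conditioning on this, the binomial tail argument already recalled as equation~(\ref{eq:xo}) gives probability at least $1/4$ that uniform crossover outputs a string with strictly more than $i$ ones, and a mutation flipping no bit (probability $(1-c/n)^n$) preserves this improvement, which elitism then accepts.

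The only delicate step is $p_r$. For $\mu\geq 3$ the population can return to $S_{1,i}$ only if the new offspring $z$ equals $x$ and the unique copy of $y$ is the one removed during the $(\mu+1)$-way tie break; any other combination leaves at least two distinct genotypes behind because we remove only a single individual. Hence I plan to write $p_r=\Pr[z=x]\cdot 1/(\mu+1)$ and decompose $\Pr[z=x]$ by the three possible parent pair types. The key simplification is that on each of the $2d$ positions where the two parents disagree, uniform crossover produces an unbiased bit and mutation preserves uniformity, so the probability of matching $x_i$ there is exactly $1/2$ independently of the other bits; on the $n-2d$ common positions, mutation must leave them untouched, contributing $(1-c/n)^{n-2d}$. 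Summing $((\mu-1)/\mu)^2(1-c/n)^n$ from parents $(x,x)$, $(2(\mu-1)/\mu^2)\,2^{-2d}(1-c/n)^{n-2d}$ from mixed parents, and $\mu^{-2}(c/n)^{2d}(1-c/n)^{n-2d}$ from parents $(y,y)$, and then factoring out $1/e^c$ with a $1+\bo(1/n)$ correction, should recover the stated bound. The main obstacle is showing that this sum is maximised at $d=1$, so that the bound holds uniformly over the unknown Hamming distance that can appear in $S_{2,i}$, and carefully tracking the $\bo(1/n)$ error terms through the combination of the three contributions.
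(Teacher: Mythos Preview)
Your proposal is correct and follows essentially the same decomposition as the paper's proof: the bounds on $p_m$, $p_d$, and $p_c$ are identical, and for $p_r$ both arguments assume a single minority individual, split by parent-pair type, and multiply by the $1/(\mu+1)$ tie-break factor. Your observation that a uniform crossover bit remains uniform after mutation yields the exact mixed-parent contribution $2^{-2d}(1-c/n)^{n-2d}$, which is slightly cleaner than the paper's ``crossover copies $x$ and no mutation, else $\bo(1/n)$'' split and makes the worst-case choice $d=1$ transparent; the paper simply assumes Hamming distance~$2$ from the outset without arguing maximality.
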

\begin{proof}

We first bound the probability $p_d$ of transitioning from the state $S_{1,i}$ 
to the state  $S_{2,i}$. In order to introduce a new solution at level  $i$  
with different genotype, it is sufficient that the mutation operator 
simultaneously flips  one of the $n-i$ 0-bits and one of the $i$  1-bits while 
not flipping any other bit.  We point out that in $S_{1,i}$, all individuals 
are identical, hence crossover is ineffective. Moreover, when the diverse 
solution is created, it should stay in the population, which occurs with 
probability $\mu/(\mu+1)$ since one of the $\mu$ copies of the majority 
individual should be removed by selection instead of the offspring.  So $p_d$ 
can be lower bounded as follows: 

\begin{align*}
p_d &\geq  \frac{\mu}{(\mu+1) } \frac{ic}{n} \frac{(n-i)c}{n} 
\left(1-\frac{c}{n}\right)^{n-2}.\\ 
\end{align*}

Using the inequality $(1-1/x)^{x-1} \geq 1/e \geq (1-1/x)^{x}$ , we now bound 
$\left(1-\frac{c}{n}\right)^{n-2}$ as follows:

\begin{align*}
\left(1-\frac{c}{n}\right)^{n-2}&\geq 
\left(1-\frac{c}{n}\right)^{n-1} \geq \left(1-\frac{c}{n}\right)^{n} \nonumber\\
&= \bigg(\left(1-\frac{c}{n}\right)^{(n/c)-1} 
\left(1-\frac{c}{n}\right)\bigg)^{c}\nonumber\\
&\geq \bigg(\frac{1}{e} 
\left(1-\frac{c}{n}\right)\bigg)^{c}\nonumber \geq \frac{1}{e^c} 
\left(1-\frac{c^2}{n}\right),\nonumber\\
\end{align*}
where in the last step we used the Bernoulli's inequality.

We can further absorb the $c^2/n$ in an asymptotic $\bo(1/n)$ term as follows:

\begin{align} 
\left(1-\frac{c}{n}\right)^{n-2} &\geq \left(1-\frac{c}{n}\right)^{n-1}\geq 
\left(1-\frac{c}{n}\right)^{n} \geq
\frac{1}{e^c}\left(1-\frac{c^2}{n}\right) \nonumber \\ 
&\geq e^{-c}-\bo(1/n)=\frac{1}{e^{c}+\bo(1/n)}.  \label{eq:series}
\end{align}
The bound for $p_d$ is then,
\begin{align*}
p_d&\geq 
\frac{\mu}{(\mu+1)} \frac{i (n-i)c^2 }{n^2 \big(e^{c}+\bo(1/n)\big)}.  
 \end{align*}

We now consider $p_c$. To transition from state $S_{2,i}$ to $S_{3,i}$ (i.e., 
$p_c$) it is sufficient 
that two genotypically different individuals are selected as parents (i.e., 
with 
probability at least $2(\mu-1)/\mu^2$), that crossover provides a better 
solution (i.e., 
with probability at least $1/4$ according to Eq.~\eqref{eq:xo}) and that 
mutation 
does not flip any 
bits (i.e., probability $(1-c/n)^n \geq 1/\big(e^{c}+ \bo(1/n)\big)$ according 
to 
Eq.~\eqref{eq:series}). Therefore, the probability is 

\begin{align*}
p_c &\geq 2\frac{\mu-1}{\mu^2 } \frac{1}{4} \left(1-\frac{c}{n}\right)^n\geq 
 \frac{\mu-1}{2 \mu^2(e^{c}+\bo(1/n))} 
\end{align*}

For calculating $p_r$ we pessimistically  assume that the Hamming distance  
between the individuals in the population is 2 and that there is always only 
one 
individual with a different genotype. A population in state $S_{2,i}$ which has 
diversity, goes back to state $S_{1,i}$ when: 

\begin{enumerate}
 \item A majority 
individual is selected twice as parent (i.e., probability 
$(\mu-1)^2/\mu^2)$, mutation does not flip any bit (i.e., probability $(1 - 
c/n)^n$) and the minority individual is discarded (i.e., probability 
$1/(\mu+1)$).
\item Two different individuals are selected as parents, crossover chooses 
either from the majority individual in both bit locations where they differ 
(i,e., prob. 1/4) and mutation does
not flip any bit (i.e., probability $(1 - c/n)^n\leq 1/e^c$)
or mutation must flip at least one specific bit (i.e., probability 
$\mathcal{O}(1/n)$). Finally, the minority individual is discarded (i.e., 
probability $1/(\mu+1)$).
\item A minority individual is chosen twice as parent and the mutation operator 
flips at least two specific bit positions (i.e., with probability $\bo(1/n^2)$) 
and finally the minority individual is discarded (i.e., probability 
$1/(\mu+1)$).
\end{enumerate}

Hence, the probability of losing diversity is:
\begin{align*}
p_r &\leq \frac{1}{\mu+1} \bigg[\frac{(\mu-1)^2}{\mu^2 } 
\left(1-\frac{c}{n}\right)^n + 2 
\frac{1}{\mu} \frac{\mu-1}{\mu } \left( 
\frac{1}{4}\left(1-\frac{c}{n}\right)^n + 
\mathcal{O}(1/n)\right) +\bo(1/n^2)\bigg]\\
 & \leq \frac{2(\mu-1)^2 + (\mu-1)+4 e^c (\mu-1)\bo(1/n)}{2 e^c \mu^2 (\mu+1)}+ 
\frac{  
\bo(1/n^2)}{ \mu+1}\\
&= \frac{(\mu-1)[2(\mu-1) + 1 + 4 e^c \bo(1/n)]+2 e^c \mu^2 \bo(1/n^2)}{2 e^c 
\mu^2 (\mu+1)}\\
&= \frac{(\mu-1)[2(\mu-1) + 1 + \bo(1/n)]+ \bo(1/n^2)}{2 e^c 
\mu^2 (\mu+1)}\\
& \leq \frac{(\mu-1) (2 \mu -1 +\mathcal{O}(1/n))}{2 e^{c} \mu^2 
(\mu+1) }. 
\end{align*}
In the last inequality we absorbed the $\bo(1/n^2)$ term into the 
$\bo(1/n)$ term. 

The transition probability $p_m$  from state $S_{1,i}$ to  
state $S_{3,i}$  is  the probability of improvement by mutation only, because 
crossover is ineffective at state $S_{1,i}$. The number of 1-bits in the 
offspring increases if the mutation operator flips one of the $(n-i)$ 0-bits ( 
i.e., with probability $c(n-i)/n$)  and does not flip any other bit (i.e., with 
probability $(1-c/n)^{n-1}\geq \big(e^c+\bo(1/n)\big)^{-1}$ according to 
Eq.~\eqref{eq:series}).
Therefore, the lower bound on the probability $p_m$ is:
\begin{align*}
p_m \geq  \frac{c(n-i)}{n \big(e^c+\bo(1/n)\big)}.
\end{align*}
\end{proof}

We are finally ready to state our main result.

\begin{theorem}\label{thm-main}

The expected runtime  of the \muGA with  $\mu \geq 3$ and 
mutation rate $c/n$ for any constant $c$  on
$\om$ is: 

\[E[T]\leq \frac{3e^{c} n \log{n}}{c(3+c)}   +  \bo{(n\mu \log{\mu} )}.\]

For $\mu=o(\log{n}/\log{\log{n}})$, the bound reduces to:

\[E[T]\leq \frac{3}{c (3+c)}   e^{c} n \log{n}\left(1+o(1)\right).\] 

\end{theorem}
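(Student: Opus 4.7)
The plan is to apply the fitness level method, using the Markov chain framework for the late fitness levels and the trivial $\Omega(1)$-improvement argument for the early ones. Concretely, I would split the $n$ fitness levels at $i^{*}=n-\lfloor n/(4c(1+e^{c}))\rfloor$. For levels $i\le i^{*}$ Lemma~\ref{lem:easy} already yields the total contribution $\mathcal{O}(n\mu\log\mu)$, so all real work is to bound $\sum_{i=i^{*}+1}^{n-1}E[T_{1,i}']$ by applying Theorem~\ref{lem:mcdom} with the transition probability bounds from Lemma~\ref{lem:prob} (note that $p_c>p_m$ is guaranteed throughout this range by Lemma~\ref{lem:easy}, so Theorem~\ref{lem:mcdom} is applicable). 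The extra $O(\mu\log\mu)$ takeover cost per level sums to $\mathcal{O}(n\mu\log\mu)$, matching the additive term in the theorem.

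Next I would simplify the bound given by Theorem~\ref{lem:mcdom}, namely $\frac{p_c'+p_r'}{p_c'p_d'+p_c'p_m'+p_m'p_r'}+\frac{1}{p_c'}$. Dividing numerator and denominator of the first fraction by $p_c'$ gives the cleaner form $\frac{1}{p_d'/\alpha+p_m'}$ where $\alpha:=1+p_r'/p_c'$. Using the bounds from Lemma~\ref{lem:prob}, a direct computation gives $p_r'/p_c' = \frac{2\mu-1}{\mu+1}(1+O(1/n))$, hence $\alpha = \frac{3\mu}{\mu+1}(1+O(1/n))$. Substituting in the lower bound for $p_d'$, the factor $\mu/(\mu+1)$ in $p_d'$ cancels neatly with $(\mu+1)/(3\mu)$ from $1/\alpha$, yielding $p_d'/\alpha = \frac{i(n-i)c^{2}}{3n^{2}e^{c}}(1+O(1/n))$. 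Combined with $p_m'=\frac{c(n-i)}{ne^{c}}(1+O(1/n))$, the bound on the absorption time becomes
\begin{equation*}
E[T_{1,i}']\;\le\;\frac{3n^{2}e^{c}}{c(n-i)\bigl(ic+3n\bigr)}(1+O(1/n))\;+\;\frac{1}{p_c'}.
\end{equation*}

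For the summation, I would substitute $k=n-i$ and use partial fractions
\begin{equation*}
\frac{1}{k((3+c)n-ck)}=\frac{1}{(3+c)n}\Bigl(\frac{1}{k}+\frac{c}{(3+c)n-ck}\Bigr).
\end{equation*}
The first piece gives $\frac{3ne^{c}}{c(3+c)}\sum_{k=1}^{K}\frac{1}{k}=\frac{3e^{c}n\log n}{c(3+c)}+O(n)$ using $K=\Theta(n)$ and the harmonic sum, which is the leading term claimed. Since $(3+c)n-ck$ stays $\Theta(n)$ throughout the range of $k$, the second piece contributes only $O(n)$. The per-level residual $1/p_c'=\frac{2\mu^{2}}{\mu-1}(e^{c}+O(1/n))=O(\mu)$ contributes $O(n\mu)$ when summed over all levels, which is absorbed into the $O(n\mu\log\mu)$ takeover term. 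Putting these together gives the first claimed bound; the second formula then follows immediately because $\mu=o(\log n/\log\log n)$ implies $n\mu\log\mu=o(n\log n)$.

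The main obstacle is the bookkeeping required to keep the tight constant $3/(c(3+c))$ intact: one must propagate the $(1+O(1/n))$ factors carefully through the cancellation $p_d'/\alpha$, verify that the multiplicative error in the harmonic sum contributes only $O(\log n)=o(n)$, and confirm that the residual $1/p_c'$ contributes $O(n\mu)$ rather than the naive $O(n\mu^{2})$ by exploiting $\mu^{2}/(\mu-1)=O(\mu)$ for $\mu\ge3$ (otherwise the additive term would be too large for the $(1+o(1))$ refinement).
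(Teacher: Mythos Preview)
Your proposal is correct and follows the same overall architecture as the paper: split the fitness levels at $i^*=n-\lfloor n/(4c(1+e^c))\rfloor$, invoke Lemma~\ref{lem:easy} for the early levels, apply Theorem~\ref{lem:mcdom} with the probability bounds of Lemma~\ref{lem:prob} for the late levels, and add the $O(\mu\log\mu)$ takeover cost from Lemma~\ref{lem-to} per level.

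Where you differ is in the algebra used to extract the leading constant. The paper expands the numerator $p_c'+p_r'$ and denominator $p_c'p_d'+p_c'p_m'+p_m'p_r'$ separately, obtains a common factor $A=3\mu+O(1/n)$ in both, and then performs several add-and-subtract manipulations to reach the per-level bound $\frac{e^{c}n}{c(n-i)}\cdot\frac{3}{3+c}+O(\mu)$, which is summed directly via the harmonic series. Your route---dividing through by $p_c'$ to get $1/(p_d'/\alpha+p_m')$ with $\alpha=1+p_r'/p_c'=\frac{3\mu}{\mu+1}(1+O(1/n))$, noting the clean cancellation of $\mu/(\mu+1)$ against $(\mu+1)/(3\mu)$, and then summing via the partial-fraction identity $\frac{1}{k((3+c)n-ck)}=\frac{1}{(3+c)n}\bigl(\frac{1}{k}+\frac{c}{(3+c)n-ck}\bigr)$---is more transparent and avoids the paper's somewhat ad hoc manipulations. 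In particular, your per-level bound $\frac{3n^{2}e^{c}}{c(n-i)(ic+3n)}$ is actually tighter than the paper's (the paper uses $ic<nc$ to replace $ic+3n$ by $(3+c)n$ before summing), though of course both routes yield the same leading term after summation. Your bookkeeping remarks about the $(1+O(1/n))$ factors and about $\mu^{2}/(\mu-1)=O(\mu)$ for $\mu\ge3$ are exactly the checks needed to keep the constant intact.
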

\begin{proof}
We use  Theorem~\ref{lem:mcdom} to bound $E[T_i]$, the expected time until the 
\muGA creates an offspring at fitness level $i+1$ or above given that all 
individuals in its initial population are at level $i$. The bounds on the 
transition probabilities established in Lemma~\ref{lem:prob} will be set as the 
exact transition probabilities of another Markov chain, $M'$, with absorbing 
time  larger than $E[T_i]$ (by Theorem~\ref{lem:mcdom}). Since 
Theorem~\ref{lem:mcdom} requires that $p_c > p_m$ and Lemma~\ref{lem:easy} 
establishes that $p_c > p_m$ holds for all fitness levels $i>n-n/4c(1+e^{c})$, we 
will only analyse $E[T_i]$ for $n-1\geq i >n-n/\big(4c(1+e^{c})\big)$. Recall 
that, by Lemma~\ref{lem:easy}, level $n-n/\big(4c(1+e^{c})\big)$ is reached in 
expected $\bo(n \mu\log{\mu})$ time.

Consider the expected absorbing time $E[T_{i}']$, of the Markov chain $M'$ with 
transition probabilities:

\begin{align*}
p_d':= \frac{\mu}{(\mu+1)} \frac{i (n-i)c^2 }{n^2 (e^{c}+\bo(1/n))}, && 
p_c':=  \frac{\mu-1}{2 \mu^2 (e^{c}+\bo(1/n))},\\
p_r' := \frac{(\mu-1) \big(2 \mu -1 +\bo(1/n)\big)}{2 e^{c} \mu^2 
(\mu+1) }, && 
p_m':=\frac{c (n-i)}{ n (e^{c}+\bo(1/n)) }.
\end{align*}

According to Theorem~\ref{lem:mcdom}: 
\begin{equation}
\label{eq:ub}
E[T_{i}] \leq E[T_{i,1}']\leq\frac{p_c' + p_r'}{ p_c' p_d' + p_c' p_m' + p_m' 
p_r'} + \frac{1}{p_c'}.\\
\end{equation}

We simplify the numerator and the denominator of the 
first term separately. The numerator is 
\begin{align} \nonumber
 p_c'+p_r' &= \frac{\mu -1}{2 \mu ^2    (e^{c}+\bo(1/n)) }+\frac{(\mu 
-1) \big(2 \mu -1 +\bo(1/n)\big)}{2 e^{c} \mu ^2 (\mu +1)}\\
&\leq \frac{\mu -1}{2 \mu 
^2 
e^{c} }\left(1+\frac{2 \mu -1 +\bo(1/n)}{\mu 
+1}\right) 
 \leq \frac{(\mu -1) [3 \mu + \bo(1/n)] }{2 \mu^2 e^{c}   (\mu +1)}. \label{eq:nom}
\end{align}
We can also rearrange the denominator $D=p_c' p_d' + p_c' p_m' + p_m' p_r' $ as 
follows:
\begin{align}\nonumber
  D&= p_c'(p_d' + p_m') + p_m' p_r'\\ \nonumber
&=\frac{(\mu-1) \left(\frac{ \mu i (n-i)c ^2}{(\mu+1) n^2 \left(e^{c 
}+\bo(1/n)\right)}+\frac{c(n-i)  }{n \left(e^{c 
}+\bo(1/n) \right)}\right)}{2  \mu^2   \left(e^{c }+\bo(1/n)\right)}\\ 
\nonumber
&\text{   }+\frac{c (n-i) (\mu-1)    \big(2 \mu -1 +\bo(1/n)\big)}{ n 
\left(e^{c 
}+\bo(1/n)\right) 2 e^{c } \mu^2 (\mu+1) }\\ \nonumber
&\geq \frac{(\mu-1) \left(\frac{ \mu i (n-i)c ^2 }{(\mu+1) n^2}+\frac{c (n-i) 
}{n}\right)}{2 \mu^2    \left(e^{2 c }+\bo(1/n)\right)} +\frac{c (n-i) (\mu-1) \big(2 \mu -1 +\bo(1/n)\big)}
      { n \left(e^{2 c}+\bo(1/n)\right)2 \mu^2 (\mu+1)}\\ \nonumber
&\geq \frac{c (n-i) (\mu-1)  }{2 \mu^2 
\left(e^{2 c }+\bo(1/n)\right)} \cdot \left(\frac{ \mu i  c}{(\mu+1) n^2}+\frac{ 1
}{n}+\frac{2 \mu -1 +\bo(1/n)}{n (\mu+1)}\right)\\ \nonumber
&\geq \frac{c (n-i) (\mu-1)  }{2 \mu^2 
\left(e^{2 c }+\bo(1/n)\right)} \cdot \left(\frac{ \mu i  c +(\mu+1) n +n \big( 2 \mu -1 +\bo(1/n) \big)}{(\mu+1) 
n^2}\right)\\ \nonumber
%
%
&\geq \frac{c (n-i) (\mu-1) \bigg(\mu i c + n \bigg[3\mu+\bo(1/n) \bigg] \bigg) }{2 \mu^2 \left(e^{2 c 
}+\bo(1/n)\right)(\mu+1) n^2   }. \\ 
\label{eq:dnm}
%
%
\end{align}
%
Note that the term in square brackets is the same in both the numerator (i.e., 
Eq.~\eqref{eq:nom}) and the 
denominator (i.e., Eq.~\eqref{eq:dnm}) including the small order terms in 
$\bo(1/n)$ (i.e., they are identical). Let $A=[3\mu + c'/n]$, where $c'>0$ is 
the 
smallest constant that satisfies the $\bo(1/n)$ in the upper bound on $p_r$ in 
Lemma~\ref{lem:prob}.
We can now put the numerator  and denominator  together and simplify the 
expression : 
\begin{align*}
\frac{p_c'+p_r'}{p_c'(p_d' + p_m') + p_m' p_r'}&\leq \frac{ (\mu-1) A}{2 \mu^2 e^{c } (\mu+1) } \cdot
\frac{2  \mu^2 \left(e^{2 c }+\bo(1/n)\right)  (\mu+1)   n^2  }{c  (n-i) (\mu-1) (\mu i  c +n A)}\\
&\leq \frac{A  \left(e^{2 c }+\bo(1/n)\right)n^2}{e^{c 
} c (n-i)    (\mu i  c +n A)}.\\
\end{align*}

By using that $\frac{e^{2 c }+\bo(1/n)}{e^{c }}\leq e^{c} + 
\bo(1/n)$, we get:
\begin{align*}
\frac{p_c'+p_r'}{p_c'(p_d' + p_m') + p_m' p_r'}&\leq \frac{A  \left(e^{ c }+\bo(1/n)\right)n^2}{c (n-i)   ( \mu i  c +n A)} \\
&\leq e^{ c }\frac{A   n^2}{c (n-i)    (\mu i  c +n A)} + \bo(1/n) \frac{A  
n^2}{c (n-i)   (\mu i  c +n A)}. \\
\end{align*}

  The facts, $n-i \geq 1$, $A=\Omega(1)$, and $\mu, i, c>0$ imply that, $n A 
+\mu i c=\Omega(n)$ and  $\frac{A  n^2}{c (n-i)   ( \mu i  c +n A)}= \bo(n) $. 
When multiplied by the $\bo(1/n)$ term, we get:

\begin{align*}
&\leq \frac{e^{c} n}{c(n-i) }\frac{A n}{ (\mu i  c +n A)} + 
\bo(1).\\
\end{align*}

By adding and subtracting $\mu i c$ to the numerator of $\frac{A n}{ ( \mu i  c +n A)} $, we obtain: 

\begin{align*}
&\leq  \frac{e^{c} n}{c(n-i) } \left(1-\frac{\mu i c 
}{\mu i  c +n A }\right)+\bo(1).\\
\end{align*}

Note that the multiplier outside the brackets, $(e^{c} n )/ (c(n-i) )$, is in the order of $\bo\big(n/(n-i)\big)$. We now add and subtract $\mu n c$ to the numerator of $-\frac{\mu i c }{\mu i  c +n A}$  to create a positive additive term  in the order of $\bo\big(\mu(n-i)/n\big)$. 

\begin{align*}
&=\frac{e^{c} n}{c(n-i) }\left(1-\frac{\mu n c }{\mu i  c +n A }+\frac{\mu 
(n-i) c }{\mu i  c +n A }\right) +\bo(1)\\
&=\frac{e^{c} n}{c(n-i) }\left(1-\frac{\mu n c }{\mu i  c +n A }\right)+ \frac{e^{c} n}{c(n-i) }\frac{\mu 
(n-i) c }{\mu i  c +n A}+\bo(1)\\
&=\frac{e^{c} n}{c(n-i) }\left(1-\frac{\mu n c }{\mu i  c +n A 
}\right)+\bo(\mu).\\
\end{align*}
%
%

Since $p_c'=\Omega(1/\mu)$, we can similarly absorb $1/p_c'$  into the 
$\mathcal{O}(\mu)$ term. 
 After the addition of the remaining term $1/p_c'$  from Eq.\eqref{eq:ub}, we 
obtain a valid upper bound on $E[T_{i}]$: 

\begin{align*}
E[T_{i}]&\leq\frac{p_c' + p_r'}{ p_c' p_d' + p_c' p_m' + p_m' p_r'}+ 
	    \frac{1}{p_c'}\leq\frac{e^{c} n }{c(n-i) } \bigg(1- \frac{\mu n c}{ \mu i  c +n A}  
  \bigg) + \mathcal{O}(\mu).
\end{align*}
In order to bound the negative term, we will rearrange its denominator (i.e., 
$n 
A+\mu i c$):
\begin{align*}
 n \big[3 \mu+ c'/n]+\mu i c&=3 \mu n + c' + \mu i c\\
&=3 \mu n + c' -(n-i) \mu c +\mu n c \\
&<  \mu n  (3+c)+ c',
\end{align*}
where the second equality is obtained by adding and subtracting $\mu n c$. 
Altogether,
\begin{align*}
E[T_{i}]&\leq\frac{e^{c} n}{c (n-i)} \bigg(1- 
\frac{\mu n c}{ \mu n (3 +c)+c'}  \bigg) + 
\mathcal{O}(\mu)\\
&=\frac{e^{c} n}{c (n-i)} \bigg(1-\frac{\mu n c+ c' \frac{c}{3+c}-c' 
\frac{c}{3+c}}{ \mu n (3 +c)+c'}  \bigg) + 
\mathcal{O}(\mu)\\
&= \frac{e^{c} n}{c (n-i)} \bigg(1- 
\frac{c}{ 3 +c} +\frac{ c' \frac{c}{3+c}}{\mu n (3 +c)+c'} \bigg)+ 
\mathcal{O}(\mu) \\
&= \frac{e^{c} n}{c (n-i)} \bigg(1- 
\frac{c}{ 3 +c} + \bo(1/n) \bigg)+ 
\mathcal{O}(\mu) \\
&= \frac{e^{c} n}{c (n-i)}   
\frac{3}{ 3+c}   + \mathcal{O}(\mu).
\end{align*}
%
%
%
If we add the expected time to take over each fitness level from 
Lemma~\ref{lem-to} and sum over all fitness levels the upper bound 
on the runtime is:
\begin{align*}
 &\sum\limits_{i=n-n/(4c(1+e^c))}^{n}\bigg(\frac{e^{c} n}{c (n-i)}  
\frac{3}{3+c} + \bo(\mu) + \bo{(\mu \log{\mu} )} \bigg)\\
&\leq \sum\limits_{i=0}^{n}\bigg(\frac{e^{c} n}{c (n-i)}  
\frac{3}{3+c} +  \bo{(\mu \log{\mu} )} \bigg) \\ 
&\leq\frac{3e^{c} n 
\log{n}}{c(3+c)}   +  \bo{(n\mu \log{\mu} )} \leq\frac{3e^{c} n \log{n}}{c(3+c)} 
  \left(1+ o(1) \right),
\end{align*}
where in the last inequality we use $\mu=o(\log{n}/\log{\log{n}})$ to prove the 
second statement of the theorem.
\end{proof} 

The second statement of the theorem provides an upper bound of $(3/4) e n \log 
n$ for the standard 
mutation rate $1/n$ (i.e., $c=1$) and $\mu=o(\log{n}/\log{\log{n}})$. 
%
The upper bound is minimised for 
$c=\frac{1}{2}\left(\sqrt{13}-1\right)$. Hence, the best upper bound is 
delivered for a mutation rate of about $1.3/n$. The resulting leading term 
of the upper bound is:

\begin{equation*}
 E[T]\leq \frac{6 e^{\frac{1}{2} 
\left(\sqrt{13}-1\right)}n \log{n}}{\left(\sqrt{13}-1\right) \left(\frac{1}{2} 
\left(\sqrt{13}-1\right)+3\right)}\approx 1.97 n \log {n}.
\end{equation*}

We point out that Theorem \ref{thm-main} holds for any $\mu \geq 3$.
Our framework provides a higher upper bound when $\mu=2$ compared to larger 
values of $\mu$.
The main difference  lies in the 
probability $p_r$ as shown in the following lemma. 

\begin{restatable}{lemma}{lemfortwo}
 The transition probabilities  $p_m$, $p_r$, $p_c$ and $p_d$ for the (2+1)~GA, with mutation rate $c/n$ and $c$ constant,
are bounded as follows:

\begin{align*}
p_d &\geq \frac{2}{3} \frac{i (n-i)c^2 }{n^2 (e^{c}+\bo(1/n))}, && 
p_c 
\geq  \frac{1}{8 (e^{c}+\bo(1/n))},\\
p_r &\leq  \frac{5}{24 e^{c}} + \bo(1/n),  && p_m \geq \frac{c 
(n-i)}{ (e^{c}+\bo(1/n)) n}.
\end{align*}
\end{restatable}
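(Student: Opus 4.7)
The plan is to follow the structure of the proof of Lemma~\ref{lem:prob} and specialise every step to $\mu=2$. The bounds on $p_d$, $p_c$ and $p_m$ fall out of the existing derivation by direct substitution, since that derivation never actually uses $\mu\geq 3$: the retention factor $\mu/(\mu+1)$ becomes $2/3$ in $p_d$, the ``two distinct parents'' factor $(\mu-1)/(2\mu^2)$ becomes $1/8$ in $p_c$, and the expression for $p_m$ does not depend on $\mu$ at all. I would rewrite those three short arguments with $\mu=2$, invoking Eq.~\eqref{eq:series} to bound $(1-c/n)^{n-1}$ and $(1-c/n)^{n-2}$ from below by $1/(e^c+\bo(1/n))$.

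The substantive part is the upper bound on $p_r$, which is where the \GAtwo genuinely differs from the $\mu\geq 3$ case. The proof of Lemma~\ref{lem:prob} pessimistically models $S_{2,i}$ as containing $\mu-1$ copies of a ``majority'' genotype $x$ together with a single ``minority'' $y$, and exploits the fact that for $\mu\geq 3$ an offspring equal to $y$ cannot wipe out all $\mu-1\geq 2$ copies of $x$ in one step. For $\mu=2$ this asymmetry disappears: $x$ and $y$ each occur exactly once, so an offspring matching either genotype yields a three-element multiset from which uniform tie breaking removes the remaining odd genotype with probability $1/3$. I would therefore enumerate the selection and crossover events that, without any mutation flip, produce an offspring matching a parent: parents $(x,x)$ with probability $1/4$ producing $x$; parents $(y,y)$ with probability $1/4$ producing $y$; mixed parents with probability $1/2$ followed by uniform crossover landing on $x$ at both differing bits with probability $1/4$; and the symmetric version landing on $y$. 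Each event contributes a factor $(1-c/n)^n$ for the absence of mutation and a factor $1/3$ for the removal of the odd element; summing, bounding $(1-c/n)^n$ via Eq.~\eqref{eq:series}, and absorbing the remaining scenarios into an $\bo(1/n)$ remainder gives the claimed bound on $p_r$.

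The main obstacle I anticipate is the bookkeeping of the $\bo(1/n)$ remainders. Each of the four leading paths admits a family of perturbed paths in which mutation flips one or two specified bits so as to bring a crossover output at level $i\pm 1$ back to level $i$ matching $x$ or $y$, or to turn an offspring matching one parent into the other (for instance, a crossover output at level $i+1$ followed by flipping exactly the bit that distinguishes it from $y$). I would argue that each such path requires flipping at least one specified bit (probability $\bo(1/n)$) or two specified bits (probability $\bo(1/n^2)$), so their cumulative contribution is $\bo(1/n)$ and can be absorbed into the remainder, completing the proof.
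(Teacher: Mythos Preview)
Your plan is essentially the paper's plan. For $p_d$, $p_c$ and $p_m$ the paper literally says ``the other probabilities are obtained by setting $\mu=2$ in the expressions from Lemma~\ref{lem:prob}'', which is exactly your substitution argument; the re-derivation of $p_r$ in the paper is also the same enumerate-the-copy-events-then-multiply-by-$1/3$ scheme you describe, with the same $\bo(1/n)$ bookkeeping for events that require flipping specified bits.

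There is, however, one concrete discrepancy in the $p_r$ calculation that you should be aware of before writing it up. Your enumeration lists \emph{four} leading no-mutation paths: $(x,x)$, $(y,y)$, mixed parents with crossover collapsing to $x$, and the symmetric case collapsing to $y$. Summing those and multiplying by the removal probability $1/3$ gives
\[
\frac{1}{3}\Bigl(\tfrac{1}{4}+\tfrac{1}{4}+\tfrac{1}{2}\cdot\tfrac{1}{4}+\tfrac{1}{2}\cdot\tfrac{1}{4}\Bigr)(1-c/n)^n
=\frac{1}{4}\,(1-c/n)^n
\le \frac{1}{4e^{c}}=\frac{6}{24e^{c}},
\]
not the $5/(24e^{c})$ stated in the lemma. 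The paper's derivation obtains $5/24$ because in the mixed-parent case it only counts crossover ``pick[ing] the same parents for the bit positions where the parents differ'' with probability $1/4$, i.e.\ collapsing to \emph{one} parent rather than either, yielding
\[
\frac{1}{3}\Bigl(\tfrac{1}{2}e^{-c}+\tfrac{1}{2}\bigl(\tfrac{1}{4}e^{-c}+\bo(1/n)\bigr)\Bigr)=\frac{5}{24e^{c}}+\bo(1/n).
\]
So if you execute your (more symmetric) enumeration exactly as written you will land on $1/(4e^{c})$ rather than the claimed $5/(24e^{c})$; the paper's constant comes precisely from dropping the second of your two mixed-parent crossover terms. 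Either adopt the paper's accounting to match the stated constant, or keep your fuller enumeration and note that it yields the slightly larger (but still $\Theta(1)$) bound $1/(4e^{c})+\bo(1/n)$.
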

\begin{proof}
 While the other probabilities are obtained by setting $\mu=2$ in the 
expressions from Lemma~\ref{lem:prob}, the probability of losing diversity is 
larger for a population of size two than it is for $\mu\geq 3$. When either 
individual is picked twice as the parent (which occurs with probability $1/2$) 
and then the offspring is not mutated (which occurs with probability less than 
$1/e^c$), a copy of a solution is introduced into the population. Moreover, a 
copy can also be introduced if two different genotypes are selected and then 
crossover picks the same parents for the bit positions where the parents 
differ, 
which occurs with probability at most $1/4$. Any other event which produces a 
copy of one of the individuals requires flipping a constant number of specific 
bits which occurs with probability $\bo(1/n)$. Once a copy  is added to the 
population, the diversity is lost if the minority solution is removed from the 
population which occurs with probability $1/3$. Hence, by putting together the 
above probabilities we get

\begin{align*}
 p_r \leq& \frac{1}{3}\bigg(\frac{1}{2}e^{-c} + \frac{1}{2}\bigg(\frac{1}{4} 
e^{-c} + 
\bo(1/n)\bigg)\bigg)\\
   \leq&\frac{5}{24 e^{c}} + \bo(1/n).
\end{align*}

\end{proof}

%
%
%
The upper bound on $p_r$ from Lemma~\ref{lem:prob} is $1/(8e^{c})$, which is 
smaller than the bound we have just found. This is due to the assumptions in 
the lemma that there can be only one genotype in the population 
at a given time which can take over the population in the next iteration. 
However, when $\mu=2$, either individual can take over the population in the 
next iteration. This larger upper bound on $p_r$ for $\mu=2$ leads to a larger upper 
bound on the runtime of $E[T]\leq  \frac{4}{c+4}  \frac{e^{c} n \log{n}}{ c}   
(1+o(1))$ for the (2+1)~GA. The calculations are omitted as they are the same 
as those of the proof of Theorem \ref{thm-main}
where $p_r \geq 5/(24 e^{c}) + \bo(1/n)$ is used and $\mu$ is set to 2.
\section{Lower bound}
\label{sec:lowerbound}
\begin{algorithm2e}[t] \label{alg:grmu+1-GA}
    \caption{\suga 
    }

    $P \gets \mu \textrm{ individuals, uniformly at random from } \{0, 1\}^n$\;
    \Repeat{\textrm{termination condition satisfied}}
    {
            Choose $x, y \in P$ uniformly at random among $P^*$, the 
individuals with the current best fitness $f^*$\;
                 $z \gets$ Uniform crossover with probability $1/2$  $(x, y)$\;
            Flip each bit in $z$ with probability $c/n$\;  
            \textbf{If} $f(z)=f^*$ and $\max\limits_{w\in P^*}(HD(w,z))>2$
            \textbf{then} $z \gets z \vee \argmax\limits_{w\in 
P^*}(HD(w,z))$ \;
            
            $P \gets P \cup \{z\}$\;
             
        Choose one element from $P$ with lowest fitness and remove it from $P$, 
breaking ties at random; 
    }
  \end{algorithm2e}
%
In the previous section we provided a higher upper bound for the $(2+1)$ GA 
compared to the $(\mu+1)$ GA with population size greater than 2 and 
$\mu=o(\log{n}/\log{\log{n}})$. To 
rigorously prove that the $(2+1)$ GA is indeed slower, we require a lower 
bound on the runtime of the algorithm that is higher than the upper bound 
provided in the previous section for the $(\mu+1)$ GA ($\mu \geq 3$).

Since providing lower bounds on the runtime is a notoriously hard task, we will 
follow a strategy previously used by Sudholt \cite{Sudholt2015} and analyse a 
version of the \muGA with greedy parent selection and greedy crossover (i.e., Algorithm \ref{alg:grmu+1-GA}) in the sense that:
\begin{enumerate}
\item  Parents are 
selected uniformly at random only among the solutions from the highest fitness 
level ({\it greedy selection}).
\item If the offspring has the same fitness as its parents and its Hamming 
distance to any individual with equal fitness in the population is 
larger than 2, then the algorithm
automatically performs an OR 
operation between the offspring and the individual with the largest Hamming 
distance and fitness, breaking 
ties arbitrarily, and  adds the resulting offspring to the population i.e.,
we pessimistically allow it to skip as many fitness levels as possible ({\it 
semi-greedy crossover}).
\end{enumerate}
The greedy selection allows us to ignore the improvements that occur via 
crossover between solutions from different fitness levels. Thus, the crossover 
is only beneficial when there are at least two different genotypes in the 
population at the highest fitness level discovered so far. The difference with 
the algorithm analysed by Sudholt \cite{Sudholt2015} is that the \suga 
we consider does not use any diversity mechanism and it does not automatically 
crossover correctly when the Hamming distance between parents is exactly 2. As 
a result, there still is a 
non-zero probability of losing diversity before a successful crossover occurs. 
The crossover operator of the \suga is less greedy than the one analysed in 
\cite{Sudholt2015}
(i.e., there crossover is automatically successful also when the Hamming 
distance 
between the parents is 2). 
We point out that the upper bounds on the runtime derived in the previous 
section also hold for the greedy \suga.

The Markov chain structure of Figure \ref{fig-mchain} is still representative of 
the states that the algorithm can be in. When there is no diversity in the 
population, either an improvement via mutation occurs or diversity is introduced 
into the population by the mutation operator. When diversity is present, both 
crossover and mutation can reach a higher fitness level while there is also a 
probability that the population will lose diversity by replicating one of the 
existing genotypes. With a population size of two the diversity can be lost by 
creating a copy of either solution and removing the other one from the 
population during \emph{environmental selection} (i.e., Line 8 in 
Algorithm~\ref{alg:grmu+1-GA}). With population sizes greater than two, the 
loss of diversity can only occur when the majority genotype (i.e., the genotype 
with most copies in the population) is replicated. Building upon this we will 
show that the asymptotic performance  of \suga for \onemax  cannot be better 
than that of the  ($\mu$+1)~GAs for $\mu>2$. 

Like  in \cite{Sudholt2015} for 
our analysis we will apply the 
 fitness level method for lower bounds proposed by 
Sudholt \cite{SudholtTEVC2013}.

\begin{theorem} \cite{SudholtTEVC2013}
\label{thm:sudholt} 
Consider a partition of the search space into 
non-empty sets $A_1 ,\ldots,A_m$. For a search algorithm $\mathcal{A}$, we say 
that it is in $A_i$ or on level $i$ if the best individual created so 
far is in $A_i$ .  Let the probability of $\mathcal{A}$ traversing from level 
$i$ 
to level $j$ in one step be at most $u_i \cdot \gamma_{i,j}$ for all $j>i$ and 
$\sum_{j=i+1}^{m} \gamma_{i,j} = 1$ for all $i$. Assume that for all $j > i$ 
and 
some $0 
\leq \chi \leq 1$ it holds 

\begin{equation*}
\gamma_{i,j}\geq \chi \sum_{k=j}^{m} \gamma_{i,k}
\end{equation*}
Then the expected hitting time of $A_m$ is at least

\begin{align*}
&\sum_{i=1}^{m-1} \Pr(\mathcal{A}\text{ starts in } A_i) \cdot\bigg( 
\frac{1}{u_i} + \chi \sum\limits_{j=i+1}^{m-1} \frac{1}{u_j} \bigg)\\
\geq & 
\sum_{i=1}^{m-1} \Pr(\mathcal{A} \text{ starts in } A_i) \cdot \chi 
\sum\limits_{j=i}^{m-1} \frac{1}{u_j}.
\end{align*}

\end{theorem}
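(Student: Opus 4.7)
The plan is to decompose the expected hitting time of $A_m$ as a sum, indexed by fitness level, of the expected dwell time in that level weighted by the probability that the level is ever visited. Two ingredients are needed: a lower bound of $1/u_j$ on the expected dwell time in each level, and a lower bound of $\chi$ on the probability that level $j$ is visited at all, for each $j \geq i+1$.

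The dwell-time bound is immediate. By hypothesis, the probability of transitioning from level $i$ to level $j$ in one step is at most $u_i \gamma_{i,j}$, and since $\sum_{j > i} \gamma_{i,j} = 1$ the total probability of leaving level $i$ in a step is at most $u_i$. Conditional on arriving in level $i$, the number of steps the algorithm spends there therefore stochastically dominates a geometric random variable with parameter $u_i$, and so has expectation at least $1/u_i$.

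For the visitation bound I would exploit the viscosity assumption $\gamma_{i,j} \geq \chi \sum_{k \geq j} \gamma_{i,k}$. Fix $j > i$ and let $K$ be the level occupied immediately before the first step at which the algorithm jumps to some level $\geq j$ (such a step exists because the algorithm eventually enters $A_m$). By the strong Markov property applied at this exit time, conditional on $K = k$ and on the jump going to some level $\geq j$, the destination equals $j$ with probability at least $\gamma_{k,j}/\sum_{\ell \geq j} \gamma_{k,\ell} \geq \chi$. Averaging over the distribution of $K$ yields $\Pr(\text{level } j \text{ is visited}) \geq \chi$.

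Combining the two bounds by linearity of expectation shows that, starting from level $i$, the expected hitting time of $A_m$ is at least $1/u_i + \chi \sum_{j=i+1}^{m-1} 1/u_j$, and averaging over the starting distribution $\Pr(\mathcal{A} \text{ starts in } A_i)$ gives the claimed inequality. The principal subtlety I anticipate is that the hypothesis only gives the upper bound $p_{i,j} \leq u_i \gamma_{i,j}$ rather than an equality, so the conditional destination distribution given a jump is not literally $\gamma$. I would handle this by a standard coupling: inflating the transition probabilities to exactly $u_i \gamma_{i,j}$ can only speed the algorithm up, so any lower bound on the hitting time of the inflated process transfers to $\mathcal{A}$ itself.
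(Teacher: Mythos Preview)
The paper does not prove this theorem; it is quoted verbatim from Sudholt's lower-bound fitness-level method \cite{SudholtTEVC2013} and used as a black-box tool in the proof of Theorem~\ref{thm:lbound}. There is therefore no ``paper's own proof'' to compare against.

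That said, your proposal recovers exactly the standard argument behind this result. The decomposition of the expected hitting time into per-level waiting times weighted by visitation probabilities, the bound of $1/u_i$ on the waiting time at level $i$ (from the total leaving probability being at most $u_i$), and the use of the viscosity condition $\gamma_{i,j} \geq \chi \sum_{k \geq j} \gamma_{i,k}$ to show that each intermediate level $j$ is hit with probability at least $\chi$ --- all of this matches the original proof in \cite{SudholtTEVC2013}. Your coupling remark (inflate each $p_{i,j}$ to exactly $u_i\gamma_{i,j}$, which can only speed the process up, so any lower bound on the inflated chain transfers to $\mathcal{A}$) is also the right way to deal with the inequality $p_{i,j} \leq u_i\gamma_{i,j}$.

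One small point of rigor: the theorem does not assume $\mathcal{A}$ is Markov, only that the one-step transition probabilities out of level $i$ are bounded by $u_i\gamma_{i,j}$ uniformly over histories. Your appeal to the strong Markov property is therefore slightly informal for $\mathcal{A}$ itself. The cleanest fix is precisely the coupling you already describe: pass to the Markov chain with exact transition probabilities $u_i\gamma_{i,j}$, carry out the stopping-time argument there, and then transfer the lower bound back via stochastic domination.
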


%
%
%
%
%
Due to the greedy crossover and the greedy 
parent selection used in \cite{Sudholt2015}, the 
population could be represented by the trajectory of a single individual. If an 
offspring with lower 
fitness was added to the population, then the greedy parent selection never 
chose it. If instead, a solution with equally high fitness and different 
genotype was created, then the algorithm immediately reduced 
the population to a single individual that is the best possible outcome 
from crossing over the two genotypes. 
The main difference between the following analysis and that of  
\cite{Sudholt2015} is that 
 we want to take into account the possibility that the gained 
diversity 
may be lost before crossover exploits it.
To this end, when individuals of equal fitness and Hamming distance 2 are 
created, crossover only exploits this successfully (i.e., goes to the next 
fitness level) with the conditional probability that crossover is successful 
before the diversity is lost. Otherwise, the diversity is lost.
%
Only when individuals of Hamming distance larger than 2 are created, we  
allow crossover to immediately provide the best possible outcome as in 
\cite{Sudholt2015}. 
Now, we can state the main result of this section.
\begin{restatable}{theorem}{thmlb}
\label{thm:lbound}
The expected runtime of the \suga with mutation probability $p=c/n$ for any 
constant $c$ on \onemax  is no less than: 
\[\frac{3e^{c}}{c\bigg(3+\max\limits_{1\leq 
k\leq n}(\frac{(n p)^k}{(k!)^2})\bigg)} n \log{n} - \mathcal{O}(n 
\log{\log{n}}).\]
\end{restatable}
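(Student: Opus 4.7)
The plan is to apply Theorem~\ref{thm:sudholt}, the fitness level method for lower bounds, to the canonical fitness levels $L_i = \{x \in \{0,1\}^n : \om(x) = i\}$ of \onemax. I need to establish (i) an upper bound $u_i$ on the per-generation probability that \suga leaves level $i$ for any higher level, and (ii) a branching parameter $\chi$ such that $\gamma_{i,j} \geq \chi \sum_{k\geq j} \gamma_{i,k}$ for the relative transition probabilities within level~$i$. The Markov chain of Figure~\ref{fig-mchain} will again govern the level-$i$ dynamics of \suga, with the single-individual takeover upon entering a new level playing the role of pessimistic initialization at state $S_1$.

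For $u_i$, the delicate point is that although $p_m \leq c(n-i)/(e^{c} n)(1+o(1))$ is small, the crossover-based exit probability $p_c$ from state $S_2$ is a positive constant. The stationary-distribution argument is that, by reversing the inequalities of Lemma~\ref{lem:prob} (upper bounds on $p_c,p_d,p_m$ and a lower bound on $p_r$ for \suga from the $\mu=2$ analysis), the ratio $p_d/p_r$ is $O((n-i)/n^2)$, so the population spends only an $o(1)$ fraction of level-$i$ generations in $S_2$. An exact computation of the per-generation exit probability from the stationary distribution of the two transient states then yields the effective bound $u_i \leq c(n-i)/(e^{c} n)\cdot(1+o(1))$, matching the mutation-only rate of the \ea up to lower-order terms. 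For $\chi$, I would bound $\gamma_{i,i+k}$ for $k\geq 2$ by considering the two mechanisms producing a $k$-level jump in one generation: mutation flipping exactly $k$ specific 0-bits of a parent copy (contributing a factor $\binom{n-i}{k}(c/n)^{k} \approx ((n-i)p)^{k}/k!$), and the semi-greedy OR-crossover which turns a same-fitness offspring at Hamming distance $2k$ from a best individual into a $k$-level improvement (requiring mutation to build the underlying $2k$-bit diversity, contributing another $(np)^{k}/k!$ factor, and crossover to preserve it). Summing these contributions normalised against the single-level probability yields the relative bound $(np)^{k}/(k!)^{2}$ and hence $\chi \geq 3/\bigl(3 + \max_{1\leq k\leq n}(np)^{k}/(k!)^{2}\bigr)$, by the same algebraic manipulation as in the proof of Theorem~\ref{thm-main}.

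Putting the two ingredients into Theorem~\ref{thm:sudholt} produces
\[
E[T] \;\geq\; \chi \sum_{i=i_0}^{n-1}\frac{1}{u_i} \;\geq\; \frac{3}{3+\max_{k}(np)^{k}/(k!)^{2}}\cdot \frac{e^{c} n}{c}\bigl(\ln n - \mathcal{O}(\log\log n)\bigr),
\]
where the starting level $i_0$ and the $\mathcal{O}(n\log\log n)$ additive loss absorb an initial prefix of levels (where the Markov-chain setup does not yet apply because $p_c\leq p_m$, as in Lemma~\ref{lem:easy}) together with the cumulative additive error from the $\mathcal{O}(1/n)$ correction terms in the transition-probability bounds. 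The main obstacle is step~(i): I must rigorously control the sojourn time in $S_2$ so that the effective rate is governed by $p_m$ rather than by the much larger $p_c$. In particular, when diversity of Hamming distance larger than $2$ emerges from compound mutations, the semi-greedy OR step can cause \suga to exit level $i$ by multiple levels at once; this possibility is what forces the $(k!)^{2}$ denominator in the branching parameter and must be handled consistently between the $u_i$ bound and the $\chi$ bound to avoid double-counting jump probabilities.
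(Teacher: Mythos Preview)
Your plan has a genuine gap in step~(i). Theorem~\ref{thm:sudholt} requires that the probability of leaving level~$i$ for level~$j$ \emph{in one step} be at most $u_i\gamma_{i,j}$, uniformly over the current population state at level~$i$. When the population is in state~$S_2$, the per-generation exit probability is $p_c=\Theta(1)$, so any uniform per-generation bound $u_i$ would have to be $\Theta(1)$ and the theorem would yield nothing. A stationary-distribution average does not rescue this: the theorem is stated for worst-case per-step probabilities, not for time-averaged rates, and summing $u_i\gamma_{i,j}\geq p_{i,j}$ over~$j$ forces $u_i\geq\sum_{j>i}p_{i,j}$. Your proposed $u_i$ equal to the mutation-only rate violates this (and, incidentally, $p_d/p_r=O((n-i)/n)$, not $O((n-i)/n^2)$).

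The paper's device is different: since greedy selection guarantees each new level is entered in state~$S_1$, one may underestimate the runtime by collapsing every excursion to~$S_2$ (from entry until return to~$S_1$ or exit to a higher level) into a single step. With this convention every step starts in~$S_1$, and the per-step probability of reaching level $i{+}k$ is $p_{m,k}+p_{d,k}+p_{d,1}\cdot\Pr[\text{exit to }i{+}k\mid S_2]$; for $k=1$ the conditional term is essentially $p_c^{*}/(p_c^{*}+p_r)=1/3+o(1)$, and for $k\geq 2$ one bounds it by $(p_{d,k-1}+p_{m,k-1})/p_r$. This is where the factor $(3+y)/3$ with $y=\max_k (np)^k/(k!)^2$ arises, and it sits in $u_i$, not in~$\chi$: the paper obtains $u_i\approx e^{-c}p(n-i)\cdot(3+y)/3$ while $\chi=1-O(1/\log n)$, because the jump-size distribution is geometric with ratio $O((n-i)p)=O(1/\log n)$ on the last $O(n/\log n)$ levels. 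Your placement of $(3+y)/3$ in~$\chi$ conflates the conditional-exit constant with the tail decay of the jump distribution; the reference to ``the same algebraic manipulation as in the proof of Theorem~\ref{thm-main}'' is also misplaced, since that proof is the upper-bound argument and does not involve~$\chi$ at all.
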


\begin{proof}

To prove the theorem statement we wish to apply Theorem~\ref{thm:sudholt}. We 
say that the \suga is on level $i$ if its current best solution has $i$ 1-bits. 
It will suffice to calculate the runtime of the \suga starting from fitness 
level $\ell= \lceil n- \min \{n/\log{n}, n/(p^2\log{n})\}\rceil$. Given the 
greedy selection, the algorithm always reaches a new level in state $S_1$ 
(i.e., 
no diversity, see Fig.~\ref{fig-mchain}). We underestimate the 
expected runtime of the algorithm by considering as one single iteration the 
phases starting when state $S_2$ is reached and ending when state $S_2$ is 
left: 
either  a higher fitness level is reached (i.e., absorbing state) or the 
diversity is lost (i.e., back in $S_1$).

In order to apply  Theorem~\ref{thm:sudholt} we need to provide an upper bound 
on the probability $p_{i,j}$ of reaching fitness level $j$ from level $i$. In 
particular, we need to show that there exist  $u_i$ and $\gamma_{i,j}$ such 
that 
 $p_{i,j} \leq u_i \cdot \gamma_{i,j} $. We first concentrate on deriving a 
bound on $p_{i,j}$.

For the \suga with the current solution at level $i\geq \ell$, let $p_{i,i+k}$ 
be the 
probability that the algorithm reaches level $j=i+k$ in the next iteration. 

We first calculate the probability when $k \geq 2$. The \suga will be on level 
$i+k$ for $k\geq 2$ only if one of the following events occurs.

\begin{itemize}
  
  \item The mutation operator flips $k$ more 0-bits than 1-bits, which 
occurs with probability $p_{m,k}$; 
  \item The mutation operator flips exactly $k$ 
0-bits and $k$ 1-bits, which occur with probability $p_{d,k}$ (because the 
\suga 
automatically makes the largest possible improvement achieavable by crossover 
when Hamming distance greater than 2 is created); 
  \item The mutation operator 
flips exactly one 0-bit and one 1-bit which, leading to state $S_2$, initiates 
a 
phase that will end once $S_2$ is left. As said such a phase will be counted as 
one iteration only. To achieve this we calculate the conditional probability 
that level $i+k$ is reached before the diversity is lost (i.e., the algorithm 
has returned to state $S_1$). From $S_2$ level $i+k$ may be reached in either 
of 
the following ways: 
  
  \begin{itemize}
    \item After a successful crossover which increases the number of 
1-bits by one, the mutation operator flips $k-1$ more 0-bits than 
1-bits (which occurs with probability at most $p_{m,k-1}$);
    \item The mutation operator flips $k-1$ 0-bits and $k-1$ 1-bits, 
such that the resulting offspring has Hamming distance 2k with one of the 
existing solutions in the population which occurs with probability at most 
$p_{d,k-1}$.
  \end{itemize}

\end{itemize}

So the total probability, $p_{i, i+k}$ for $k\geq 2$ can be upper bounded as 
follows:
\begin{align}
p_{i, i+k} &\leq p_{m,k} + p_{d,k} + p_{d, 1} \frac{ p_{d, k-1} +  
p_{m, 
k-1}}{ p_{d, k-1} +p_{m, k-1}+p_r}\nonumber\\ 
&\leq p_{m,k} + p_{d,k} + p_{d, 1} \frac{ p_{d, k-1} + p_{m, k-1}}{p_r}. 
\label{eq1}
\end{align}

Here, the term multiplying $p_{d,1}$ is the conditional probability of reaching 
level $i+k$ before losing diversity, an event which occurs with probability 
$p_r$. In the conditional probability we use $1\cdot p_{m,k-1}$ instead of 
multiplying it by the crossover probability, which still gives a correct bound.

Overall, to bound the probability, $p_{i,i+k}$ we need upper bounds on 
the probabilities $p_{m,k}$, $p_{d,k}$, and a lower bound on $p_r$.

We start with $p_r$. To lose diversity it is sufficient that the outcome of 
crossover is a copy and then that mutation does not flip any bits (probability 
at least $(1-p)^n$). Finally we need environmental selection to remove the 
different individual (probability $1/3$). For the outcome of crossover to be a 
copy,  either the same individual is selected  twice (probability $1/2$) and 
crossover is ineffective (probability $1$)  or two different individuals are 
selected (probability $1/2$) and crossover picks both differing bits from the 
same parent (probability $1/2$). So, $p_r$, the probability of losing 
diversity, 
is at least,
\begin{align*}
p_r &\geq  \bigg(1-p\bigg)^n   \frac{1}{3} \bigg(\frac{1}{2} + \frac{1}{2} 
\cdot \frac{1}{2} 
 \bigg) = \frac{(1-p)^n}{4}. \\
\end{align*}
We derive $ p_{m,k}$  from Lemma 2 in \cite{SudholtTEVC2013} where it is proved 
that, for levels $i\geq \ell$, the probability that standard bit mutation with 
mutation rate $p$ flips $k$ more 0-bits than 1-bits is upper bounded by:
\begin{align}
p_{m,k} &\leq p^{k} (1-p)^{n-k} \frac{(n-i)^{k}}{k!} \cdot \bigg( 1+ 
\frac{3}{5} 
\cdot \frac{i(n-i)p^2}{(1-p)^2}\bigg) \nonumber \\
&= \frac{(1-p)^{n+k}}{k!} \bigg(\frac{p (n-i) }{(1-p)^2}\bigg)^{k} \bigg( 1+ 
\frac{3}{5} \cdot \frac{i(n-i)p^2}{(1-p)^2}\bigg) \nonumber\\
&\leq (1-p)^{n} \bigg(\frac{p (n-i) }{(1-p)^2}\bigg)^{k} \bigg( 1+ \frac{3}{5} 
\cdot \frac{i(n-i)p^2}{(1-p)^2}\bigg)\nonumber\\
&= (1-p)^{n} \bigg(\frac{p (n-i) }{(1-p)^2}\bigg)^{k} \bigg( 1+ 
\bo(1/\log{n}) \bigg). \label{eq:pmk}
\end{align}
Here, the last inequality follows from $i<n$ and $n-i\leq n-\ell$ which implies 
$i(n-i)p^2 = \bo(1/\log{n})$. Finally, the probability $p_{d,k}$ that the 
mutation operator flips exactly $k$ 0-bits and $k$ 1-bits is upper bounded as 
follows (see also Theorem 6 in \cite{Sudholt2015}):
\begin{align*}
p_{d,k} &\leq \frac{ (1-p)^n (n-i)^k p^{k} (n p)^{k}}{ k! k! (1-p)^{2 k}}\\
&\leq (1-p)^n \bigg(\frac{p (n-i) }{(1-p)^2}\bigg)^{k} \frac{(np)^{k}}{k! k!}.\\
\end{align*} 
Now, we separately bound some terms from Eq.~\eqref{eq1}:
\begin{align}
p_{d, 1}&\leq \frac{ (1-p)^n  (n-i) np^2 }{  (1-p)^{2} } \label{eq:dk1}\\
p_{d,k-1}+p_{m,k-1}&\leq (1-p)^n \bigg(\frac{p (n-i) 
}{(1-p)^2}\bigg)^{k-1}\cdot \bigg( \frac{(np)^{k-1}}{\big((k-1!)\big)^2 } + 1 + 
\bo(1/\log{n})\bigg)\nonumber \\
\frac{p_{d,1}}{p_r}&\leq \frac{ (1-p)^n  (n-i) np^2 }{  (1-p)^{2} } \cdot 
\frac{4}{(1-p)^n}\nonumber\\
&= 4 n p \frac{ p (n-i) }{  (1-p)^{2} }. \nonumber\\ \nonumber
\end{align}
\begin{align*}
 p_{d,1}\frac{ p_{d, k-1} + p_{m,k-1}}{p_r} &\leq 4 np (1-p)^n \bigg(\frac{p 
(n-i) 
}{(1-p)^2}\bigg)^{k} \cdot \bigg( \frac{(np)^{k-1}}{\big((k-1!)\big)^2} + 1 + \bo(1/\log{n})\bigg).\\
\end{align*}

Therefore the upper bound on $p_{i,i+k}$ for $k\geq 2$ is:

\begin{align*}
p_{i,i+k}&\leq p_{m,k} + p_{d,k} + p_{d, 1} \frac{ p_{d, k-1} + p_{m,k-1}}{p_r} 
\\
&\leq (1-p)^n \left(\frac{p (n-i)}{(1-p)^2}\right)^{k}\cdot \\ &\text{ }\bigg[1+ \mathcal{O}(1/\log{n})+ \frac{(n p)^k}{(k!)^2}+4 n p \bigg(\frac{(n 
p)^{k-1}}{((k-1)!)^2}+1+\mathcal{O}(1/\log{n})\bigg) \bigg].\\
\end{align*}

For $y:= \max\limits_{1\leq k \leq n}(\frac{(n p)^k}{(k!)^2})$ the above bound 
reduces to:

\begin{align*}
p_{i,i+k}&\leq  (1-p)^n \left(\frac{p (n-i)}{(1-p)^2}\right)^{k}\cdot \bigg(1+\mathcal{O}(1/\log{n})+y+ 4np(y+1)\bigg),
\end{align*}
where we used that $np=c$, is a constant.

We now calculate the missing term of $p_{i,i+k}$ (i.e., $k=1$). For $k=1$, an 
improvement can be achieved only if one of the following events occurs:

\begin{itemize}
  \item The mutation operator flips one more 0-bit 
than it flips 1-bits which happens with probability  $p_{m,1}$ 
(Eq.~\eqref{eq:pmk} with $k=1$).
  \item The mutation operator flips exactly one 1-bit and one 0-bit with 
probability $p_{d,1}$ (Eq.~\eqref{eq:dk1}) and a phase in $S_2$ starts. Then, 
before the population loses its diversity either:
  \begin{itemize}
    \item A successful crossover between the 
two solutions with different genotypes occurs and the mutation does not 
flip more 1-bits than 0-bits (probability $p_{c}^*$),
    \item The crossover operator yields a solution on the same fitness level 
with probability $p_s$ and then the mutation operator flips one more 0-bit than 
it flips 1-bits with probability $p_{m,1}$.
    \item The crossover operator yields a solution on a worse fitness level 
(with probability less than $1-p_s$) and then the mutation operator flips two 
more 0-bits than it flips 1-bits which happens with probability  $p_{m,2}$ 
(Eq.~\eqref{eq:pmk} with $k=2$).
  \end{itemize}
\end{itemize}

So $p_{i,i+1}$ can be upper bounded as follows:

\begin{align}
p_{i,i+1}&\leq  p_{m,1} + p_{d,1} \frac{p_{c}^{*} + 
p_s p_{m,1}+ (1-p_s)p_{m,2}}{p_{c}^{*}+p_r+p_s p_{m,1}+ 
(1-p_s)p_{m,2}} \nonumber\\
&\leq  p_{m,1} + p_{d,1} \frac{p_{c}^{*} + 
p_{m,1}}{p_{c}^{*}+p_r}.\nonumber\\
\end{align}
The second inequality is due to $p_s p_{m,1}+ (1-p_s)p_{m,2} \leq p_{m,1}$. 
Substituting the first $p_{m,1}$ term and $p_{d,1}$ we get:
\begin{align}
p_{i,i+1}&\leq (1-p)^n \frac{p(n-i)}{   
(1-p)^2}(1+\mathcal{O}(1/\log{n}))+\frac{(1-p)^n(n-i)np^2  }{  (1-p)^{2}} \cdot \frac{p_{c}^{*} + 
p_{m,1}}{p_{c}^{*}+p_r} \nonumber\\
&= (1-p)^n \frac{p(n-i)}{   
(1-p)^2}\bigg(1+\mathcal{O}(1/\log{n})+np  \frac{p_{c}^{*} + 
p_{m,1}}{p_{c}^{*}+p_r}\bigg). \label{eq2}
\end{align}

We now bound $p_{c}^{*}$. For crossover to increase the number of 1-bits by 
one, 
parent selection must pick two different individuals (probability $1/2$). Then 
1-bits have to be picked from the two positions where the parents differ 
(probability $1/4$). 
Finally, mutation must not flip more 1-bits than 0-bits. This event occurs
either  if  mutation does not flip any bits at all (with probability 
$(1-p)^n$) or if it flips at least one of the $n-i\leq \ell$ 0-bits (with 
probability at 
most $\bo(1/\log{n})$). So, the probability that 
crossover increases the number of 1-bits by one is

\begin{align*}
p_{c}^{*} &\leq  \frac{1}{2} \frac{1}{4} \bigg((1-p)^n+ 
\mathcal{O}(1/\log{n})\bigg)\\
& \leq \frac{(1-p)^n}{8} + \mathcal{O}(1/\log{n}).
\end{align*}
The probability $p_{m,1}$ is in the order of $\bo(1/\log{n})$ because the 
number 
of 0-bits is less than $\ell$. The term $(p_{c}^{*} + 
p_{m,1})/(p_{c}^{*}+p_r)$ in Eq.~\eqref{eq2} is therefore at most:
\begin{align*}
\frac{p_{c}^{*} + p_{m,1}}{p_{c}^{*}+p_r} &\leq  
\frac{\frac{(1-p)^n}{8}+\bo({1/\log{n})}}{\frac{(1-p)^n}{8}+\frac{(1-p)^n}{4}}\\
&=\frac{1}{3} + \bo(1/\log{n}).
\end{align*}

Hence, Eq. \eqref{eq2} is bounded as follows:

\begin{align*}
p_{i,i+1}&\leq (1-p)^n\frac{ p(n-i)  }{  (1-p)^{2}} 
\bigg(1+\bo(1/\log{n})+np\big( \frac{1}{3} + \bo(1/\log{n})\big) \bigg) \\ 
&\leq (1-p)^n \frac{ p(n-i) }{ (1-p)^{2}}  \bigg(1+\frac{c}{3} + 
\mathcal{O}(1/\log{n})\bigg) \\ 
&\leq (1-p)^n \frac{p(n-i)  }{  (1-p)^{2}}  \bigg(\frac{3+c}{3} + 
\mathcal{O}(1/\log{n})\bigg). 
\end{align*}

We now can determine the parameters $u_i$ and $\gamma_{i,k}$ for the 
application 
of 
Theorem~\ref{thm:sudholt}. We define,

\begin{align}
u_{i}^{'}&:=  \frac{e^{-c} (n-i)p  }{  (1-p)^{2}} \bigg(\frac{3+y}{3} + 
\mathcal{O}(1/\log{n})\bigg),\label{eq:ui}\\
\gamma^{'}_{i,i+k}&:= \left(\frac{(3+12c)p (n-i)}{ (1-p)^2}\right)^{k-1}. 
\label{eq:gi}
\end{align}

Observe that $y\geq c$ and for large enough $n$, 
\\$u_{i}^{'}\gamma^{'}_{i,i+k}\geq 
p_{i,i+k}$.
Consider the normalised variables \\$u_i:= u_{i}^{'} \sum_{j=i+1}^{n} 
\gamma_{i,j}^{'}$ and $\gamma_{i,j}:=\frac{\gamma_{i,j}^{'}}{\sum_{j=i+1}^{n} 
\gamma_{i,j}^{'}}$. ~Since \\$u_{i}\gamma_{i,j}=u_{i}^{'}\gamma^{'}_{i,j}\geq 
p_{i,j}$, it follows that $u_i$ and $\gamma_{i,j}$  satisfy their definitions in
Theorem~\ref{thm:sudholt}.  

Now we turn to the main condition of Theorem~\ref{thm:sudholt}.

\begin{align*}
\sum_{k=j-i}^{n-i} \gamma_{i,i+k}^{'} &\leq \bigg(\frac{(3+12c)p (n-i)}{ 
(1-p)^2}\bigg)^{j-i-1} \cdot \sum_{k=0}^{\infty} \bigg(\frac{(3+12c)p (n-i)}{ 
(1-p)^2}\bigg)^{k}.\\
\end{align*}
For large enough $n$ , $\frac{(3+12c)p (n-i)}{ (1-p)^2} < 1$. Therefore,
\begin{align*}
\sum_{k=j-i}^{n-i} \gamma_{i,i+k}^{'}&\leq \gamma_{i,j}^{'}  \frac{1}{1- \frac{(3+12c)p (n-i)}{ (1-p)^2}}\\
&\leq \gamma_{i,j}^{'}  \frac{1}{1- \mathcal{O}(1/\log{n})}\\
\end{align*}

Since $\gamma^{'}_{i,j}\geq \sum_{k=j}^{n} \gamma^{'}_{i,k}$ implies 
$\gamma_{i,j}\geq \sum_{k=j}^{n} \gamma_{i,k}$, for $\chi = 1- 
\mathcal{O}(1/\log{n})$ the main condition of Theorem~\ref{thm:sudholt} is 
satisfied. 

All that remains is to calculate $u_i$.
\begin{align*}
u_i&= u^{'}_i \sum_{j=i+1}^{n} \gamma_{i,j}^{'} \leq u^{'}_i \sum_{k=1}^{n-i} 
\left(\frac{(3+12c)p (n-i)}{ (1-p)^2}\right)^{k-1}\\
&\leq u^{'}_i \sum_{k=0}^{\infty} 
\left(\frac{(3+12c)p (n-i)}{ (1-p)^2}\right)^{k}\\
&\leq \frac{e^{-c} (n-i)p  }{  (1-p)^{2}} \bigg(\frac{3+y}{3} + 
\mathcal{O}(1/\log{n})\bigg)  \frac{1}{1- \mathcal{O}(1/\log{n})},\\
\end{align*}
where in the first inequality we substituted $\gamma_{i,j}^{'}$ from 
Eq.~\eqref{eq:gi} and in the last $u_{i}^{'}$ from Eq.~\eqref{eq:ui}.
Overall, according to Theorem~\ref{thm:sudholt} the runtime is
\begin{align*}
E\left[ T \right] &\geq Pr\{\text{Initialise at level $\ell$ or below}\} \cdot 
\chi \cdot \sum_{i=\ell}^{n-1}\frac{1}{u_i}\\
&\geq (1-1/\log{n}) (1- \mathcal{O}(1/\log{n})) \bigg(\frac{3+y}{3} + \mathcal{O}(1/\log{n})\bigg)^{-1} \cdot \\
& \frac{e^{c} 
(1-p)^2}{p} (1- \mathcal{O}(1/\log{n})) \sum_{i=\ell}^{n-1} \frac{1}{n-i}. \\
\end{align*}
Finally, we combine the $(1-\bo(1/\log{n})$ terms and 
$(1-p)^2=1-\bo(1/\log{n})$ into
a single $(1-\bo(1/\log{n})$ term and change the index of the sum:
\begin{align*}
&\geq  (1- \mathcal{O}(1/\log{n}))  \bigg(\frac{3}{3+y} - \mathcal{O}(1/\log{n})\bigg) \frac{e^{c}}{p} 
\sum_{i=1}^{n-\ell} 
\frac{1}{i}\\
&\geq (1- \mathcal{O}(1/\log{n}))   \bigg(\frac{3}{3+y} - 
\mathcal{O}(1/\log{n})\bigg) \frac{e^{c}}{p} \log{(n-\ell)} \\
&\geq (1- \mathcal{O}(1/\log{n})) \bigg(\frac{3}{3+y} - 
\mathcal{O}(1/\log{n})\bigg) \frac{e^{c}}{p} (\log{n}-\log{\log{n}}) \\
& \geq\bigg(\frac{3}{3+y} - \mathcal{O}(1/\log{n})\bigg) \frac{e^{c}}{p} 
(\log{n}-\bo(\log{\log{n}})) \\
&\geq\frac{3 e^{c}  n \log{n}}{c(3+\max\limits_{1\leq k \leq n}(\frac{(n 
p)^k}{(k!)^2}))} 
-\mathcal{O}(n \log{\log{n}}).
\end{align*}
\end{proof}

Note that for $c\leq 4$, $\max\limits_{1\leq k \leq n}(\frac{(n p)^k}{(k!)^2})\leq p 
n=c$. Since $E[T]\geq e n \log{n}$ for $c\geq 3$, for the purpose of finding the 
mutation rate that minimises the lower bound, we can reduce the statement of 
the theorem to:
\begin{equation*}
   \frac{3 e^{c}  n \log{n}}{c(3+c)} -\bo(n\log{\log{n}}).
\end{equation*}
The theorem provides a lower bound of $(3/4)e n \log{n} -\bo(n\log{\log{n}})$ 
for the 
standard mutation rate $1/n$ (i.e., c = 1).
The lower bound is minimised for $c=\frac{1}{2}\left(\sqrt{13}-1\right)$. Hence, the smallest lower bound is delivered for a mutation rate of about $1.3/n$. The resulting lower bound is :
\begin{align*}
 E[T]\geq& \frac{6 e^{\frac{1}{2} 
\left(\sqrt{13}-1\right)}n \log{n}}{\left(\sqrt{13}-1\right) \left(\frac{1}{2} 
\left(\sqrt{13}-1\right)+3\right)} -\bo(n\log{\log{n}}) \\
\approx& 1.97 n \log {n} -\bo(n\log{\log{n}}).
\end{align*}
Since the lower bound for the \suga matches the upper bound for the \muGA with 
$\mu>2$, the theorem proves that, under greedy selection and semi-greedy crossover, populations of size 2 
cannot be faster than larger population sizes up to 
$\mu=o(\log{n}/\log{\log{n}})$.
In the following section we give experimental evidence that the greedy 
algorithms 
are faster than the standard \GAtwo, thus suggesting that the same conclusions 
hold also for the standard non-greedy algorithms. 
%
\section{Experiments}
\label{sec:experiments}

\begin{figure}[t]
\caption{Average runtime over 1000 independent runs versus problem size $n$.}
\label{fig:ave}
\includegraphics[width=\textwidth]{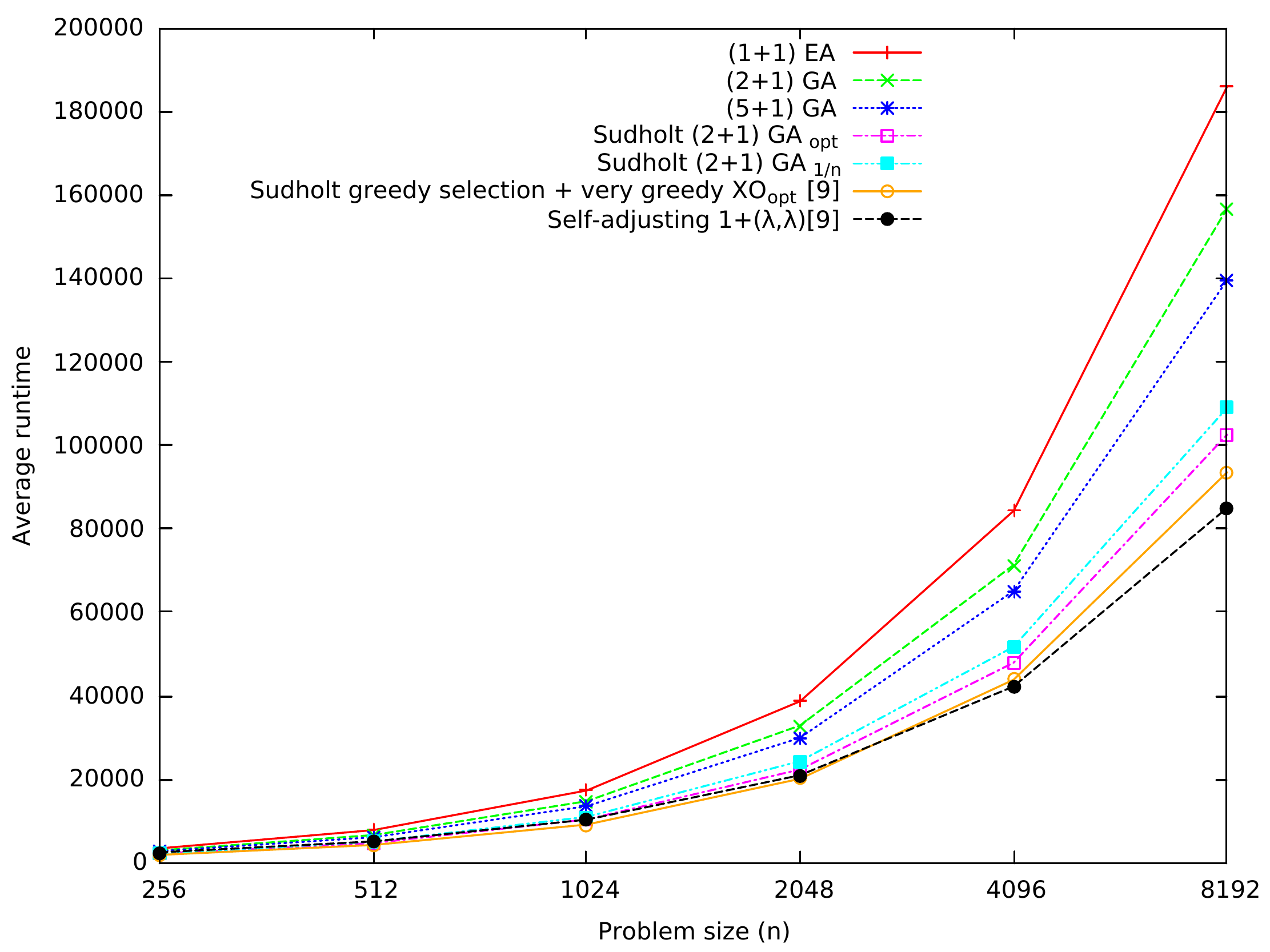}
\end{figure}
\begin{figure}[t]
\caption{Comparison between standard selection, greedy selection and greedy 
selection + greedy crossover GAs. The runtime is averaged over 1000 independent 
runs. }
\label{fig:greed}
\includegraphics[width=\textwidth]{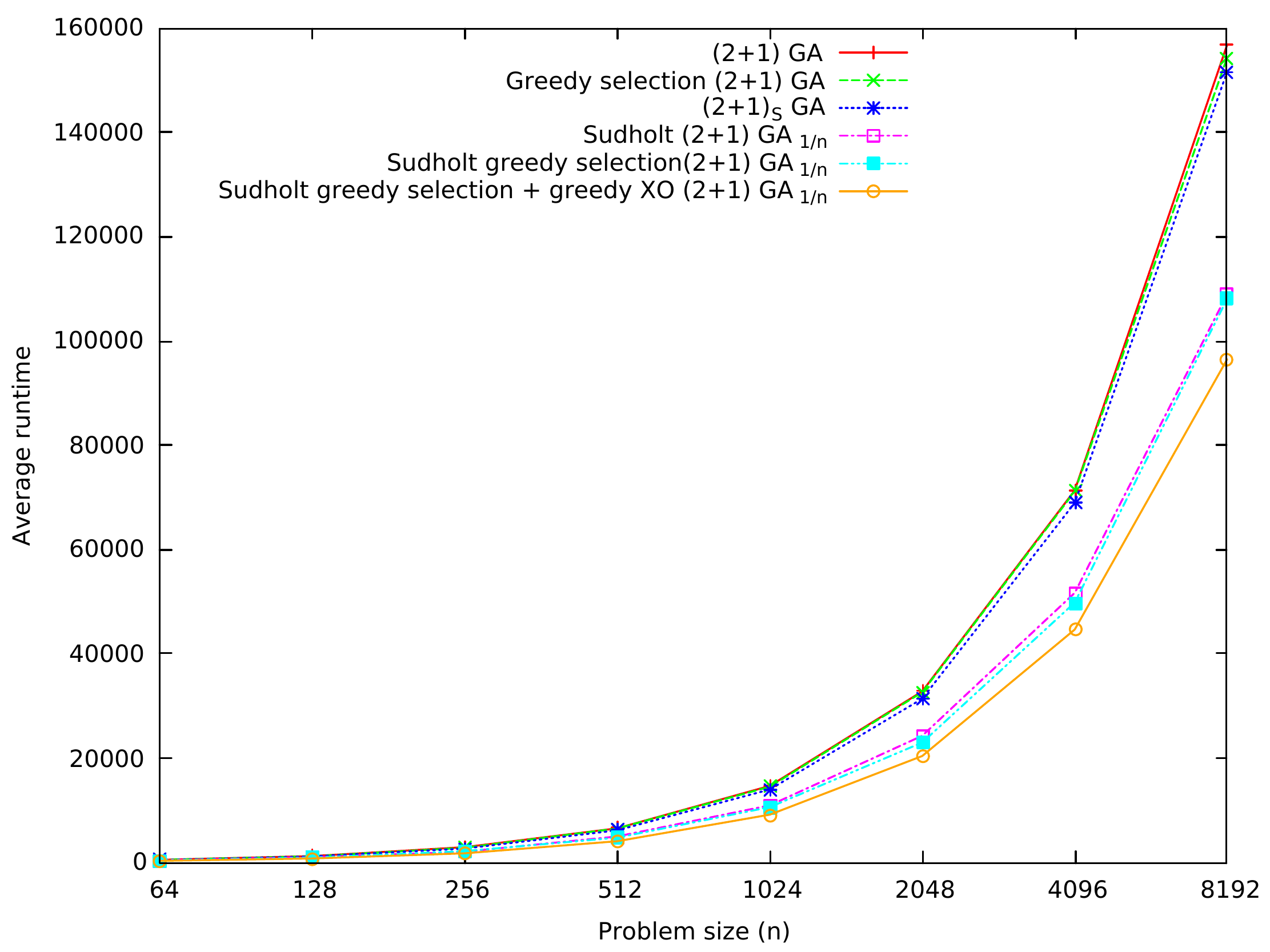}
\end{figure}
\begin{figure}[t]
\caption{Average runtime gain of the \muGA versus the \GAtwo for different 
population sizes, errorbars show the standard deviation normalised by the 
average runtime for $\mu=2$.}
\label{fig:2GA}
\includegraphics[width=0.9\textwidth]{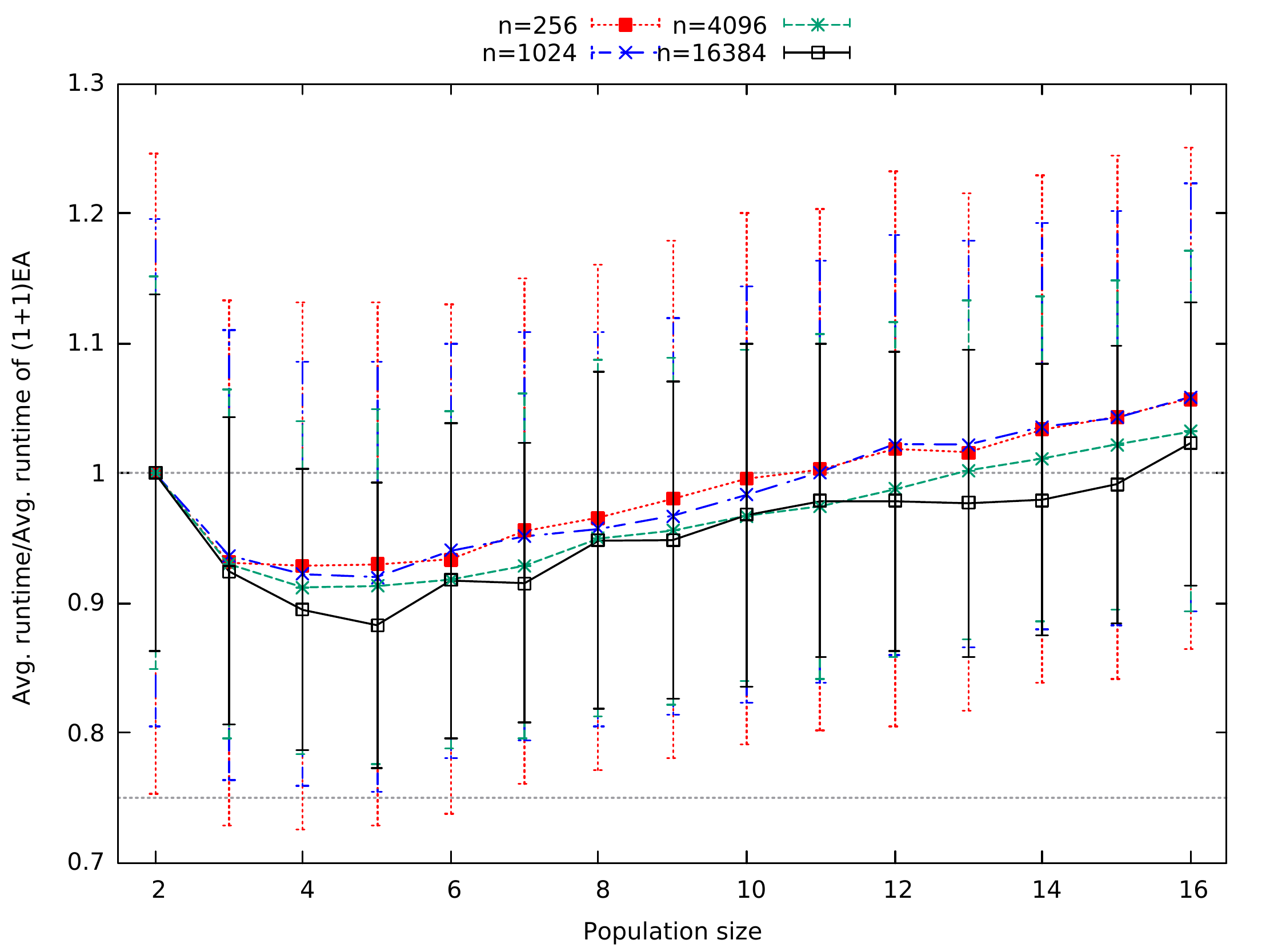}
 \end{figure}
The theoretical results presented in the previous sections pose some new 
interesting questions. On one hand, the theory suggests that population sizes 
greater than 2 benefit the \muGA for hillclimbing the \onemax function. On the 
other hand, the best runtime bounds are obtained for a mutation rate of approximately $1.3/n$, suggesting that higher mutation rates than the standard $1/n$ rate may 
improve the performance of the \muGA. In this section we present the outcome of 
some experimental investigations to shed further light on these questions. In 
particular, we will investigate the effects of the population size and 
mutation rate on the runtime of the steady-state GA for \om  and compare its 
runtime against other GAs that have been proved to be faster 
than mutation-only EAs in the literature. 

We start with an overview of the performance of the algorithms.
  In Fig. \ref{fig:ave},  we plot the average 
runtime over 1000 independent runs of the \muGA with $\mu=2$ and $\mu=5$ (with 
uniform parent 
selection and standard $1/n$ mutation rate) for exponentially increasing problem 
sizes and compare it against the fastest standard bit mutation-only EA with 
static mutation rate (i.e., the (1+1)~EA with $1/n$ mutation rate). 
While the algorithm using $\mu=5$ outperforms the $\mu=2$ version,
they are both faster than the (1+1)~EA already for small problem sizes. We also 
compare the algorithms against the (2+1)~GA investigated by Sudholt 
\cite{Sudholt2015} where diversity is enforced by the environmental selection always preferring distinct 
individuals of equal fitness - the same GA variant that was first proposed and 
analysed in  \cite{Jansen2002}. We run the 
algorithm both with standard mutation rate $1/n$ and with the optimal mutation 
rate $(1+\sqrt{5})/(2n)$. Obviously, when diversity is enforced, the 
algorithms are faster.
Finally, we also compare the algorithms against the (1+($\lambda$,$\lambda$))~GA with self-adjusting population sizes 
and Sudholt's (2+1)~GA as they were compared previously in \cite{DoerrDoerrEbel2015}. Note that in \cite{DoerrDoerrEbel2015} (Fig. 8 therein) 
Sudholt's algorithm was implemented with a very greedy parent selection operator that always prefers distinct individuals on the highest fitness level for reproduction.

In order to decompose the effects of the greedy parent selection, greedy crossover and the use of 
diversity, we conducted further experiments shown in Figure~\ref{fig:greed}. 
Here, we see that it is indeed the enforced diversity that creates the 
fundamental performance difference. Moreover, the results show that the greedy 
selection/greedy crossover GA is slightly faster than the greedy parent 
selection GA and that  
greedy parent selection is slightly faster than standard selection. Overall, the figure suggests that the lower 
bound presented in Theorem~\ref{thm:lbound} is also valid for the standard \GAtwo with 
uniform parent selection (i.e., no greediness). In Figure~\ref{fig:greed}, it 
can be noted that the performance difference between the GA with greedy 
crossover and greedy parent selection analysed 
in \cite{Sudholt2015} and the (2+1)~GA with enforced diversity and without 
greedy crossover is more pronounced than the performance difference between the 
standard (2+1)~GA analysed in Section~\ref{sec:upperbound} and the \suga which 
was analysed in Section~\ref{sec:lowerbound}. The reason behind the difference 
in discrepancies is that the \suga does not implement the greedy crossover 
operator 
when the Hamming distance is 2. We speculate that cases where the Hamming 
distance 
is \emph{just} enough for the crossover to exploit it occur much 
more frequently  than the cases where a larger Hamming distance is present. As 
a result, the performance of the \suga does not deviate much from the standard 
algorithm. Table~\ref{tbl:stat} presents the 
 mean and standard deviation of the runtimes of the algorithms 
depicted in Figure~\ref{fig:ave} and Figure~\ref{fig:greed} over 1000 
independent runs.

Now we investigate the effects of the population size on the \muGA. We 
perform 1000 independent runs of the \muGA with uniform parent selection and 
standard 
mutation rate $1/n$ for increasing population sizes up to $\mu=16$. In 
Fig.~\ref{fig:2GA} we present average runtimes divided by the runtime of the 
$(2+1)$~GA and in Fig.~\ref{fig:EA} normalised against the runtime of the 
(1+1)~EA. In both figures, we see that the runtime improves for $\mu$ larger 
than $2$ and after reaching its lowest value increases again with the population 
size. It is not clear whether there is a constant optimal static value for $\mu$ around 
4 or 5. The experiments, however, do not rule out the  possibility that the 
optimal static population size increases slowly with the problem size (i.e., $\mu=3$ 
for $n=256$, $\mu=4$ for $n=4096$ and $\mu=5$ for $n=16384$).  
On the other hand, 
we clearly see that as the problem size increases we get a larger improvement on 
the runtime. This indicates that the harder is the problem, more useful are the 
populations.  In particular, in  Figure~\ref{fig:EA} we see that the 
theoretical 
asymptotic gain of 25\% with respect to the runtime of the (1+1)~EA is  
approached more and more closely as $n$ increases. For the considered problem 
sizes, the \muGA is faster than the (1+1)~EA for all tested values of $\mu$. 
However, to see the runtime improvement  of the \muGA 
against the \GAtwo  for $\mu > 15$  the experiments (Fig.~\ref{fig:2GA}) suggest 
that greater problem sizes would need to be used.

Finally, we investigate the effect of the mutation rate on the runtime. Based on 
our previous experiments we set the population size to the best seen value of $\mu=5$ 
and perform 1000 independent runs for each $c$ value ranging from $0.9$ to 
$1.9$. In 
Figure~\ref{fig:chi}, we see that even though the mutation rate $c\approx 1.3$ 
minimises the upper bound we proved on the runtime, setting a 
larger mutation rate of $1.6$ further decreases the runtime. 


\newgeometry{margin=1cm}

\begin{landscape}\begin{table*}[t]
\small

 \caption{Statistics for the experimental results of Fig.~\ref{fig:ave} and 
Fig.~\ref{fig:greed}}
\label{tbl:stat}
\resizebox{\linewidth}{!}{
\begin{tabular}[]{l || c | c || c | c || c | c || c | c }
\multirow{2}{*}{\textbf{Algorithms}}& 
  \multicolumn{2}{c||}{$n=64$}&  
\multicolumn{2}{c||}{$n=128$} &  \multicolumn{2}{c||}{$n=256$} &  
\multicolumn{2}{c}{$n=512$}  \\ \cline{2-9}
& Mean & Std. dev.& Mean & Std. dev.& Mean & Std. dev.& 
Mean & Std. dev.\\  \hline
 (1+1)~EA &612.66 & 208.88 & 1456.81 & 450.51 & 3397.72 & 887.07 & 7804.65 & 
1791.44  \\ 
 (2+1)~GA & 546.57 & 179.61 & 1271.30 & 357.41 & 2952.70 & 727.84 & 6586.60 & 
1378.50   \\
Greedy (2+1)~GA & 519.93 & 177.28 & 1228.86 & 355.23 & 2854.18 & 730.07 & 
6548.51 & 1434.21 \\ 
 (5+1)~GA & 529.29 & 156.19 & 1194.50 & 281.92 & 2744.80 & 595.41 & 6087.60 & 
1164.30 \\ 
 Sudholt`s (2+1)~GA$_{1/n}$ &449.92 & 121.87 & 1040.26 & 277.71 & 2099.24 & 
375.74 & 5022.30 & 873.47 \\ 
 Sudholt`s (2+1)~GA$_{opt}$ &427.87 & 108.51 & 978.50 & 212.13 & 
2142.82 & 411.30 & 4682.88 & 742.95 \\ 
 (2+1)$_{S}$ GA&484.40 & 174.87 & 1183.80 & 366.28 & 2705.09 & 710.85 
& 6183.55 & 1451.07 \\ 
 Sudholt's greedy selection + greedy XO (2+1)~GA$_{1/n}$&326.21 & 108.04 & 
790.47 & 223.50 & 1787.49 
& 425.70 & 4105.20 & 851.14 \\ 
 Sudholt`s greedy selection (2+1)~GA$_{1/n}$&410.56 & 117.45 & 958.64 & 222.26 
& 
2192.50 & 
469.57 & 4801.66 & 901.88 \\ 
 Sudholt's (2+1)GA diverse crossover$_{opt}$ &386.14&104.59&907.719 
 &222.24&1873.37&409.68&4243.65&812.38 \\
 Self-adjusting $1+(\lambda,\lambda)$ 
 &583.91&146.58&1209.14&164.96&2478.51&294.53&5084.68&462.34\\
\end{tabular}}

\resizebox{\linewidth}{!}{
\begin{tabular}[]{l || c | c || c | c || c | c || c | c }
\multirow{2}{*}{\textbf{Algorithms} }&  \multicolumn{2}{c||}{$n=1024$} &  
\multicolumn{2}{c||}{$n=2048$} &  \multicolumn{2}{c||}{$n=4096$} &  
\multicolumn{2}{c}{$n=8192$}\\ \cline{2-9}
& Mean & Std. dev.& Mean & Std. dev.& Mean & Std. dev.& 
Mean& Std. dev.\\  \hline
 (1+1)~EA & 17267.39 & 3653.38 & 38636.71 & 6966.43 & 84286.18 & 13563.25 & 
186012.84 & 
28660.69 \\ 
 (2+1)~GA & 14715.00 & 2876.00 & 32843.00 & 5574.70 & 71346.00 & 10810.00 & 
156800.00 & 
23357.00 \\ 
Greedy (2+1)~GA & 14553.66 & 2892.29 & 32667.89 & 6075.17 & 71149.61 & 
11990.71 & 154354.66 & 23250.14 \\
 (5+1)~GA & 13538.00 & 2436.00 & 29907.00 & 4909.60 & 65136.00 & 9758.00 & 
139590.00 & 18622.00 \\ 
 Sudholt`s (2+1)~GA$_{1/n}$ &10962.58 & 1960.08 & 24324.46 & 4543.38 & 51708.68 
& 8772.04 & 108990.46 & 
12729.60 \\ 
 Sudholt`s (2+1)~GA$_{opt}$  &10372.87 & 1545.92 & 22335.50 & 3117.32 & 
47913.78 & 6094.81 & 102614.39 & 
12261.37 \\ 
 (2+1)$_{S}$ GA&
14028.86 & 2852.73 & 31403.85 & 5935.81 & 68957.18 & 11905.57 & 151635.40 & 
25489.27 \\ 
 Sudholt's greedy selection + greedy XO (2+1)~GA$_{1/n}$&
9206.59 & 1713.77 & 20446.31 & 3691.36 & 44677.26 & 7311.26 & 96525.02 & 
13997.87 \\ 
 Sudholt`s greedy selection (2+1)~GA$_{1/n}$ & 
10640.57 & 1777.29 & 23035.28 & 3624.30 & 49857.03 & 7123.71 & 108087.00 & 
14881.14 \\
 Sudholt's (2+1)GA diverse 
crossover$_{opt}$ & 9132.63 & 149.92 & 20098.44 &3171.93 &43815.65 &6334.92 & 
93581.99 &12396.22 \\
 Self-adjusting $1+(\lambda,\lambda)$ 
 &10324.62&695.95&20951.38&1157.73&42216.53&1862.68&85028.97&2703.08 \\

\end{tabular}}
 \end{table*}
\end{landscape}

\restoregeometry

\section{Conclusion}
The question of whether genetic algorithms can hillclimb faster than 
mutation-only algorithms is a 
long standing one. 
On one hand, in his pioneering book, Rechenberg
had given preliminary experimental evidence that crossover may speed up the runtime
of population based EAs for generalised \onemax \cite{Rechenberg1973}.
On the other hand, further experiments  
suggested that genetic algorithms were slower hillclimbers than the \ea \cite{Jansen2005,MitchellHollandForrest94}. 
In recent years it has been rigorously shown that crossover and mutation can outperform  
algorithms using only mutation.
Firstly, a new theory-driven GA called (1+($\lambda$,$\lambda$))~GA has been 
shown to 
be asymptotically faster for hillclimbing the \onemax function than any unbiased 
mutation-only EA \cite{DoerrDoerrEbel2015}.
Secondly, it has been shown how standard ($\mu$+$\lambda$)~GAs are 
twice as fast as
their standard bit mutation-only counterparts for \onemax as long as diversity 
is enforced 
through environmental selection \cite{Sudholt2015}.
\begin{figure}[t]
\caption{Average runtime gain of the \muGA versus the (1+1)~EA for different 
population sizes, errorbars show the standard deviation normalised by the 
average runtime of the (1+1) EA.}
\label{fig:EA}
\includegraphics[width=\textwidth]{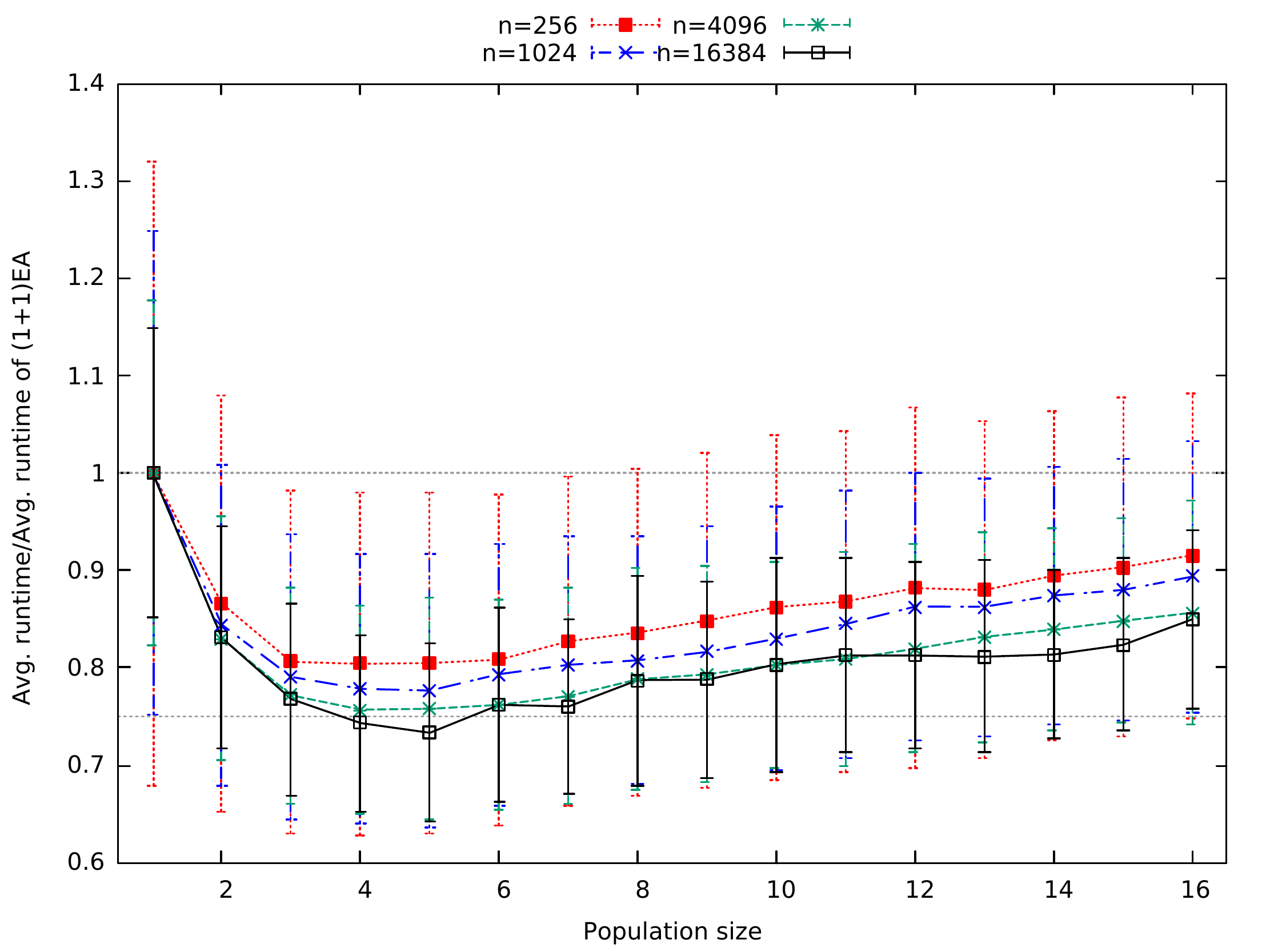}
 \end{figure}
\begin{figure}[t]
\caption{Average runtime gain of the (5+1)~GA for various mutation rates versus 
the standard 
$1/n$ mutation rate, errorbars show the standard deviation normalised by the 
average runtime for $1/n$ mutation rate.}
\label{fig:chi}
\includegraphics[width=\textwidth]{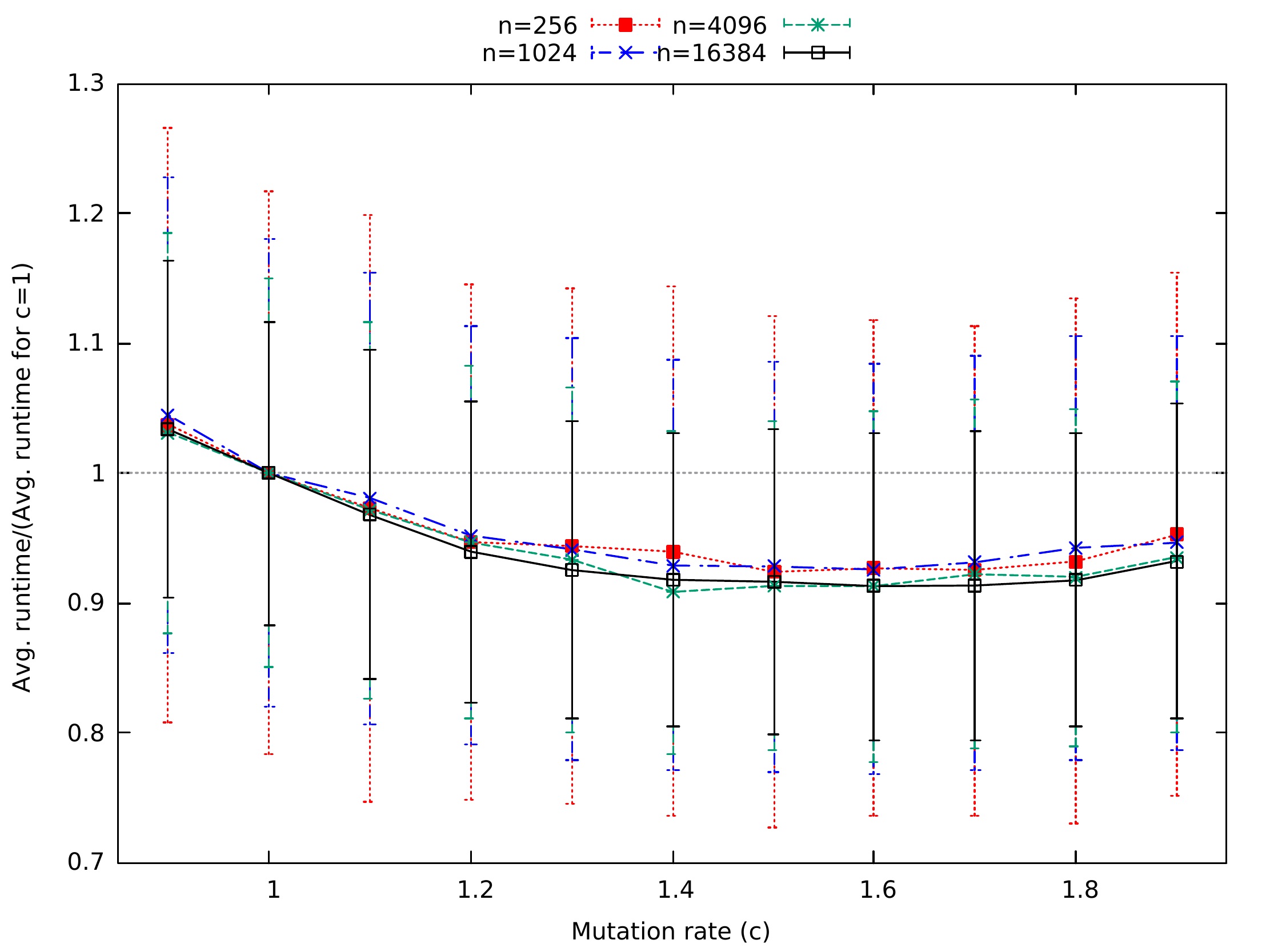}
\end{figure}

In this paper we have rigorously proven that standard steady-state GAs with 
 $\mu \geq 3$ and $\mu=o(\log{n}/\log{\log{n}})$ are at least 25\% faster than 
all unbiased standard bit mutation-based EAs  with static
mutation rate for \onemax even if no diversity is enforced. The Markov Chain 
framework we used to achieve the upper bounds on the runtimes
should be general enough to allow future analyses of more complicated GAs, for instance with greater offspring population sizes
 or more sophisticated crossover operators. 
 A limitation of the approach is that it applies to classes of problems that have plateaus of equal fitness.
 Hence, for functions where each genotype has a different  fitness value our approach would not apply.
 An open question is whether the limitation is inherent to our framework or whether it is crossover that would not help steady-state EAs at all on such fitness landscapes.

Our results also explain that populations are useful not only in the exploration phase of the optimization, but also to improve
exploitation during the hillclimbing phases. In particular, larger population sizes increase the probability of creating and maintaining
diversity in the population. This diversity can then be exploited by the crossover operator. 
Recent results had already shown how the interplay between mutation and crossover may allow the emergence of diversity, which in turn 
allows to escape plateaus of local optima more efficiently compared to relying on mutation alone \cite{Dang2016b}.
Our work sheds further light on the picture by showing that populations, crossover and mutation together, 
not only  may escape optima more efficiently, but may be more effective also in the exploitation phase.

Another additional insight gained from the analysis is that the standard mutation rate $1/n$ may not be optimal for the \muGA on \onemax.
This result is also in line with, and nicely complements, other recent findings concerning steady state GAs.
For escaping plateaus of local optima it has been recently shown that increasing the mutation rate above the standard $1/n$ rate 
leads to smaller upper bounds on escaping times \cite{Dang2016b}. However, when jumping large low-fitness valleys, mutation rates of about $2.6/n$ seem to be optimal static rates
(see the experiment section in \cite{Arxiv2016b, JumpTEVC}). 
For \onemax lower mutation rates seem to be optimal static rates, but still considerably larger than
the standard $1/n$ rate.

New interesting questions for further work have spawned. 
Concerning population sizes an open problem is to rigorously prove whether the optimal size grows with the problem size and at what rate.
Also determining the optimal mutation rate remains an open problem. 
While our theoretical analysis delivers the best upper bound on the runtime with a mutation rate of about 1.3/$n$, experiments suggest a larger optimal mutation rate.
Interestingly, this experimental rate is very similar to the optimal mutation 
rate (i.e., approximately 1.618/$n$) of the ($\mu$+1)~GA   with enforced 
diversity proven in \cite{Sudholt2015}. The 
benefits of higher than standard mutation rates in elitist algorithms is a 
topic 
that is gaining increasing interest~\cite{RankBasedGA,Donya2017,FastGA}.

Further improvements may be achieved by dynamically adapting the 
population size and mutation rate during the run. Advantages, in this sense, have been shown for the  (1+($\lambda$,$\lambda$))~GA 
by adapting the population size \cite{DoerrAdaptiveLambda} and for single 
individual algorithms by adapting the mutation rate \cite{DoerrPPSN2016, 
HyperGecco2017}.    
Generalising the results to larger classes of hillclimbing problems is intriguing. 
In particular, proving whether speed ups of the \muGA compared to the (1+1)~EA  are also achieved for royal road functions would 
give a definitive answer to a long standing question \cite{MitchellHollandForrest94}. 
Analyses for larger problem classes such as linear functions and classical combinatorial optimisation problems would lead to further insights.
Yet another natural question is how the \muGA hillclimbing capabilities compare to ($\mu$+$\lambda$)~GAs and generational GAs.

\bibliography{IEEEabrv,document}
\bibliographystyle{IEEEtran}
\end{document}